\newcommand{\tifs}[1]{{\color{black} #1}}
\begin{document}
%
% paper title
% Titles are generally capitalized except for words such as a, an, and, as,
% at, but, by, for, in, nor, of, on, or, the, to and up, which are usually
% not capitalized unless they are the first or last word of the title.
% Linebreaks \\ can be used within to get better formatting as desired.
% Do not put math or special symbols in the title.
\title{Adversarial Attack and Defense on Point Sets}
%
%
% author names and IEEE memberships
% note positions of commas and nonbreaking spaces ( ~ ) LaTeX will not break
% a structure at a ~ so this keeps an author's name from being broken across
% two lines.
% use \thanks{} to gain access to the first footnote area
% a separate \thanks must be used for each paragraph as LaTeX2e's \thanks
% was not built to handle multiple paragraphs
%

\author{Jiancheng~Yang,~Qiang~Zhang,~Rongyao~Fang,~Bingbing~Ni,~Jinxian~Liu,~and~Qi~Tian,~\IEEEmembership{Fellow,~IEEE}
\thanks{This work was supported by National Science Foundation of China (U20B200011, 61976137). Authors appreciate the Student Innovation Center of SJTU for providing GPUs.}
\thanks{J. Yang, Q. Zhang, R. Fang, J. Liu are with Shanghai Jiao Tong University, Shanghai, China, and with MoE Key Lab of Articial Intelligence, AI Institute, Shanghai Jiao Tong University, Shanghai, China}
\thanks{B. Ni is with Shanghai Jiao Tong University, Shanghai, China, with MoE Key Lab of Articial Intelligence, AI Institute, Shanghai Jiao Tong University, Shanghai, China, and also with Huawei Hisilicon, Shanghai, China}
\thanks{Q. Tian is with Huawei Noah’s Ark Lab, China.} 
\thanks{J. Yang, Q. Zhang and R. Fang contributed equally to this article.} 
\thanks{Corresponding author: Bingbing Ni}}

\maketitle

% As a general rule, do not put math, special symbols or citations
% in the abstract or keywords.
\begin{abstract}
Emergence of the utility of 3D point cloud data in safety-critical vision tasks (e.g., ADAS) urges researchers to pay more attention to the robustness of 3D representations and deep networks. To this end, we develop an attack and defense scheme, dedicated to 3D point cloud data, for preventing 3D point clouds from manipulated as well as pursuing noise-tolerable 3D representation. A set of novel 3D point cloud attack operations are proposed via pointwise gradient perturbation and adversarial point attachment / detachment. We then develop a flexible {\em perturbation-measurement} scheme for 3D point cloud data to detect potential attack data or noisy sensing data. \tifs{Notably, the proposed defense methods are even effective to detect the adversarial point clouds generated by a proof-of-concept attack directly targeting the defense.} Transferability of adversarial attacks between several point cloud networks is addressed, and we propose an momentum-enhanced pointwise gradient to improve the attack transferability. We further analyze the transferability from adversarial point clouds to grid CNNs and the inverse. Extensive experimental results on common point cloud benchmarks demonstrate the validity of the proposed 3D attack and defense framework.
\end{abstract}

% Note that keywords are not normally used for peerreview papers.
\begin{IEEEkeywords}
point clouds, adversarial learning, adversarial attack, defense on adversarial attack.
\end{IEEEkeywords}

% For peer review papers, you can put extra information on the cover
% page as needed:
% \ifCLASSOPTIONpeerreview
% \begin{center} \bfseries EDICS Category: 3-BBND \end{center}
% \fi
%
% For peerreview papers, this IEEEtran command inserts a page break and
% creates the second title. It will be ignored for other modes.
\IEEEpeerreviewmaketitle

\section{Introduction}
% The very first letter is a 2 line initial drop letter followed
% by the rest of the first word in caps.
% 
% form to use if the first word consists of a single letter:
% \IEEEPARstart{A}{demo} file is ....
% 
% form to use if you need the single drop letter followed by
% normal text (unknown if ever used by the IEEE):
% \IEEEPARstart{A}{}demo file is ....
% 
% Some journals put the first two words in caps:
% \IEEEPARstart{T}{his demo} file is ....
% 
% Here we have the typical use of a "T" for an initial drop letter
% and "HIS" in caps to complete the first word.
% point clouds
The popularity of 3D sensors, e.g., LiDAR and RGB-D cameras, raises a number of research concerns with 3D vision. \tifs{Increasingly} accessible data makes data-driven deep learning approaches practical to be used in many fields, such as autopilot \cite{zhou2017voxelnet, qi2017frustum}, robotics \cite{jaremo2018density, deng2018ppfnet} and graphics \cite{yan2016perspective, kato2018neural, wang2018pixel2mesh}. Particularly, point cloud is one of the most natural data structures to represent the 3D geometry. Typical CNN-based approaches \cite{su2015multi, qi2016volumetric} not only require pre-rendering the sparse point clouds into unnecessarily voluminous representations, but also introduce quantization artifacts \cite{qi2016pointnet}. PointNet \cite{qi2016pointnet} and DeepSet \cite{zaheer2017deep} pioneer the direction of learning representations on the raw point clouds. By learning representation from \emph{permutation-invariant} and \emph{size-varying} {\bf point sets}, this idea is shown to be effective and efficient, and has achieved remarkable success \cite{pointnetplusplus, pointcnn, wang2018dynamic}.

\begin{figure}[]
	\centering
	\includegraphics[width=8.5cm]{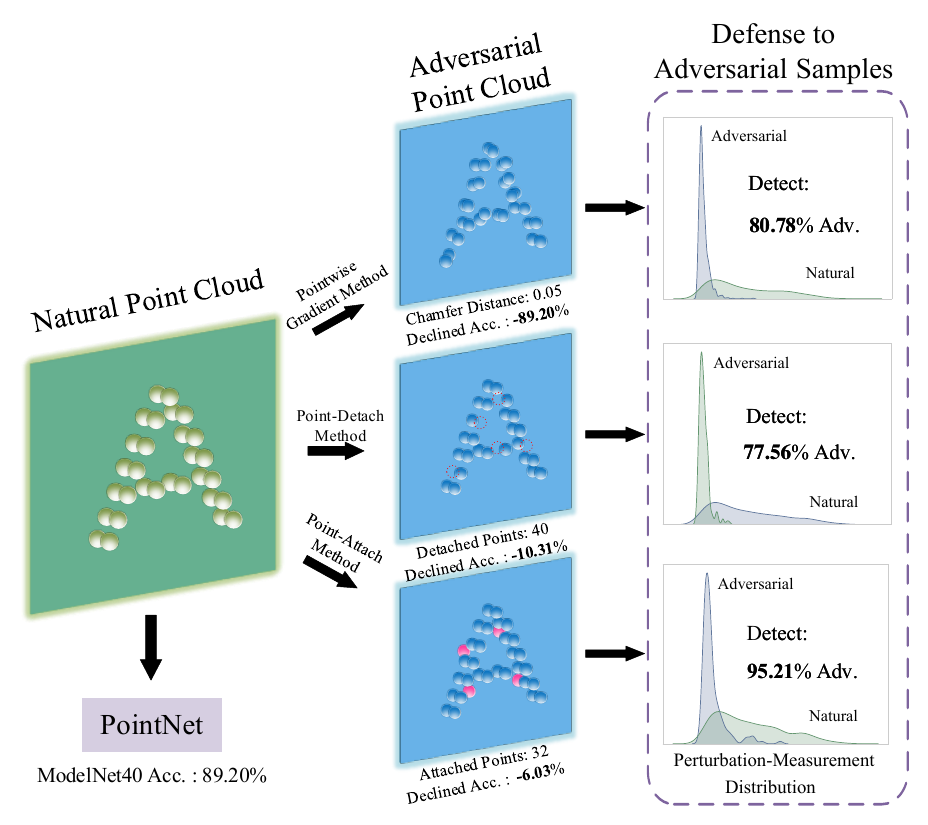}
	\caption{{\bf Illustration of the proposed adversarial attack and defense on point clouds.} A well-tuned PointNet \cite{qi2016pointnet} is vulnerable to our adversarial attacks. Fortunately, the proposed {\em perturbation-measurement} defense strategies, with help of the victim PointNet only, detect most ($>75\%$) of these adversarial \tifs{examples}. Best viewed in color.}
	\label{fig:my_label}
\end{figure}

% adversarial attacks on point clouds
Considering the popularity and potential use in numerous crucial applications of point clouds, e.g., self-driving, it is urgent to address the security issues. It is proven that even high-performance deep image classifiers are vulnerable to the adversarial attacks \cite{originalAdversarial, tramer2017ensemble, athalye2018obfuscated}, where visually similar \tifs{examples} are generated to mislead the networks to classify incorrectly. We are interested in the security and robustness of the deep nets given adversarial point clouds, especially when learning is based on the raw point clouds.

It is expected that point cloud classification networks are fragile to {\em pointwise gradient} guided optimization adversarial attacks (e.g., FGSM-like methods \cite{FGSM}), due to the generality of these attack methods. We are showing that these pointwise adversarial noises generate visually-similar adversarial \tifs{examples} in 3D space. Besides, considering the realworld 3D sensors, the common issues are sensing noises and \tifs{occluded} points. To this regard, we generate extreme sensing noises by adversarial {\em point-attach} method, and remove possibly vanishing essential points by adversarial {\em point-detach} method. As will be shown later, these prudent attached or detached points are likely to change the decision of well tuned networks. Theoretically, these points are highly related to the \emph{critical points} \cite{qi2016pointnet} existing in the models.

% how to defense? 
On the other hand, we pursue defense strategies to prevent the proposed adversarial attacks. Based on the observation that the adversarial point clouds destabilize the model output, we explore the intrinsic {\bf robustness of the victim models}. A \emph{perturbation-measurement} defense framework is developed to detect the adversarial \tifs{examples}, which in principle defines a family of easy-to-implement methods. The proposed perturbations are related to {\em input transformation} \cite{guo2017countering} and {\em stochastic activation} \cite{dhillon2018stochastic} methodology, however either the shattered or stochastic gradients ``give a false sense of security" \cite{athalye2018obfuscated}. To this end, instead of utilizing the perturbed \tifs{examples} as defense directly, we measure the statistics of the perturbed outputs of the attacked models to detect the adversarial \tifs{examples}. By changing the threat model with low costs, we detect most of the adversarial \tifs{examples} generated by our attack methods.

Considering the possibility of black-box attacks, the transferability of adversarial point clouds is also analyzed. Several successful point cloud networks are attacked with the proposed methods. Our results imply that high-accuracy models do not enjoy high adversarial robustness. Besides, the adversarial point clouds generated by vanilla gradient-guided attack is poorly transferable, while the momentum-enhanced gradient-guided attack \cite{dong2018boosting} significantly improve the attack transferability. Furthermore, we investigate the transferability between the attack on point clouds and grid CNNs, due to the importance of CNN-based approaches in 3D computer vision.  

The main contributions of this paper are threefold: we 1) address the security issues of adversarial attacks on point clouds, with three attack methods; 2) propose a flexible perturbation-measurement defense framework to detect the adversarial point clouds, with help of the victim models; and 3) analyze the transferability of adversarial point clouds between several point cloud networks, together with that between point clouds and grid CNNs. Our findings could benefit the deep network design for point clouds in terms of robustness and reliability.

%-------------------------------------------------------------------------
\section{Preliminaries}

\subsection{Deep Learning on Point Clouds} \label{sec:dl-on-pc}

Point clouds are widely involved in modern 3D vision research. Shape classification on ModelNet40 dataset \cite{wu20153d} is one of the most important tasks that benchmark this area. 
Voxel-based methods \cite{su2015multi, qi2016volumetric}, which classify pre-rendered point clouds with 2D / 3D CNNs, require computation-intensive rendering and (regular or sparse) convolution operation, hindering the real-time application in practice.

Alternatively, learning directly on point sets provides several advantages on speed and accuracy, with the requirements of 1) \emph{permutation-invariance} and 2) ability to process \emph{size-varying} data. \tifs{The classification of sets is called
multi-instance learning and it was introduced in \cite{dietterich1997solving}. The classifier for deep sets was
first introduced in \cite{pevny2017using} \cite{edwards2016towards}. Recently, PointNets \cite{qi2016pointnet} and DeepSets \cite{zaheer2017deep} achieve great performance, }and it has become a mainstream on reasoning about point clouds. To make the networks able to consume point sets, they use a stack of symmetric functions based on \tifs{max-pooling (or average-pooling)} to build the neural networks. 
PointNet++ \cite{pointnetplusplus} extends PointNet by introducing hierarchical structure with spatial nearest-neighbor graphs, and achieves better classification performance. Subsequent methods (e.g., SO-Net \cite{li2018so}, PointCNN \cite{pointcnn}, PointSIFT\cite{pointsift}) improve the core operations within the hierarchical structure, and DGCNN \cite{wang2018dynamic} introduces a dynamic way to build spatial-neighbor graphs. \tifs{As for theoretically correctness guarantee for these method families, it is provided in \cite{pevny2018approximation}.} We refer these networks as \emph{Point Cloud Networks (PC-Nets)} for simplicity.

\tifs{
For readability, we briefly introduce the core operator $f: \mathbb{R}^{N\times c} \rightarrow \mathbb{R}^{k}$ of PointNets \cite{qi2016pointnet} as follows:

\begin{equation}
	f(X) = \gamma (\max_{x_i \in X} \{h(x_i)\}),
\end{equation}
where $X \in \mathbb{R}^{N\times c}$ denotes a point set, $N$ denotes the number of points, $c$ denotes the dimension of a single point. $k$ is the number of classification categories. $x_i \in \mathbb{R}^{c}$ denotes a single point (a row) in $X$, $\gamma$ and $h$ are two learnable neural networks. }It is proven that this simple function is able to approximate any continuous set function \cite{qi2016pointnet} defined under {\em Hausdorff distance}, 
\begin{equation} \label{eq:haus-dist}
	\mathcal{D}_H(S,S') = \max_{s' \in S'} \min_{s \in S} l_2(s,s').
\end{equation}

Moreover, for a $K$-dimension max-pooling based aggregation, it is also proven that there exists $N_C$ points in the input points ($N_C \leq K$) responsible for the final output, i.e., the output does not change by removing any remaining points. For this reason, the $N_C$ points are called the {\em critical points}. The authors declare robustness on ``small corruptions or extra noise points" \cite{qi2016pointnet} thanks to the property of critical points. On the contrary, we demonstrate that the PointNets are vulnerable to corruptions in adversarial setting; besides, attacks based on the critical points are also successful.

\subsection{Adversarial \tifs{Examples} and Adversarial Attack}
% #Draft: what is adversarial \tifs{examples} in general? and what's that for a point sets?

\emph{Adversarial \tifs{example}} was first formalized by Szegedy et al. \cite{originalAdversarial}. Conceptually, an adversarial \tifs{example} $X_a$ is generated by deliberately perturbing a \tifs{benign sample} $X$, which misleads model $f$ to output incorrectly. We focus on untargeted attack in this study; In other words, for some distance metric $\mathcal{D}$ and constrain $\epsilon \in \mathbb{R}$,
\tifs{
\begin{equation}
	\arg \max f(X_a) \neq \arg \max f(X), \quad s.t. \mathcal{D}(X_a,X)<\epsilon.
\end{equation}
}

Generating adversarial \tifs{examples} is called \emph{adversarial attack}. There has been numerous works focusing on attack on image classification, such as fast gradient sign method (FGSM) \cite{FGSM}, least-likely-class iterative method \cite{BIM} and one-pixel attack \cite{onePixel}. Neural networks are proven to be vulnerable to the adversarial attacks, especially in a \emph{white-box} setting, where the adversary has full access, including hidden layers and gradients, to the models.

We explore white-box adversarial attacks on point clouds. The main difference is that images are fixed-dimension {\bf grids}, while point clouds are represented as {\bf sets}. A set is orderless and unstructured, thus distance between sets can be very differently defined from that of grids. In our experiments, asymmetric {\em Chamfer pseudo-distance} ($\mathcal{D}_{C}$) \cite{achlioptas2018learning} is used as a main measurement,
\begin{equation} \label{eq:chamfer-dist}
	\mathcal{D}_{C}(S,S') = \frac{1}{||S'||} \sum_{s' \in S'} \min_{s \in S} l_2(s,s'),
\end{equation}
where $||\cdot||$ denotes points number in a set. Note the Chamfer distance (Eq. \ref{eq:chamfer-dist}) is not so strict as Hausdorff distance (Eq. \ref{eq:haus-dist}): the latter is less tolerant of outliers in the target set $S'$. In practice, outlier noises are very common in 3D sensing, that is the reason why we choose Chamfer distance.

Apart from modifying \tifs{benign samples} point-wisely, it is feasible to perturb point sets by attaching or detaching some points, which is generally impossible for images. However, neither the Hausdorff distance nor the Chamfer distance well measures the number of points changed. For this case, we define a {\em number change measurement},

\begin{equation} \label{eq:number-change}
	\Delta N_{S,S'} = \text{abs}( ||S|| - ||S'|| ).
\end{equation}

\subsection{Defense on Adversarial Attack}

Several prior studies develop defense strategies against adversarial attacks, these works can be regarded as either of two directions: 1) Improve model capacity of the attacked models to classify adversarial \tifs{examples} correctly. For instance, adversarial training is a simple baseline to do this, i.e., expanding training data with adversarial \tifs{examples} \cite{FGSM, Deepfool}; using an additional network to ``denoise" the adversarial \tifs{examples} \cite{DCN} can be also regarded as increasing the model capacity. 2) Detect the adversarial \tifs{examples} and reject them, e.g., LID \cite{LID} suggests a characteristic to detect adversarial \tifs{examples}\footnote{Though the authors emphasize that it ``is not intended as a defense" in personal communication \cite{athalye2018obfuscated}. }. 

In this paper, we mainly focus on attack and defense upon PC-Nets, on the ModelNet40 and MNIST dataset. Only white-box, untargeted attacks are addressed. Our defense strategy follows the ``detect-and-reject" approaches. We note there is a parallel study \cite{xiang2018generating} working on the adversarial attack on point sets, however we develop more attack methods, e.g., point detachment, and a novel defense framework. Besides, we address the transferability of the adversarial point clouds.

\section{Attack Methods}  \label{sec:attack}
\subsection{Principles and Notations}
We represent an input point cloud as $X\in \mathbb{R}^{N\times c}$, where $N$ denotes the number of points, and $c$ denotes the dimension of a single point ($c=3$ for the 3D space). The set of all point clouds in the dataset is defined as $\mathbb{X}$. $f(\cdot)$ is the point cloud network output probability scores of different classes, and $c^{*}(\cdot)$ is the true label. 

Consequently, the attack problem setting is described as follows: by changing input data $X$ to \tifs{$X_a=T(X)$}, where \tifs{$T$} denotes an attack function, given a certain attack budget $\epsilon$ under distance metric \tifs{$\mathcal{D}_{C}$}, the goal is to decrease classification accuracy of attacked model. We define \tifs{$\mathbb{B}$} as the set of adversarial \tifs{examples} under budget $\epsilon$,
\tifs{
\begin{equation}
  \begin{multlined}
	\mathbb{B}: \{X_a \|\ \arg\max f(X)=c^{*}(X) \ \text{and} \\ \arg \max f(X_a)\neq c^{*}(X_a)\}. 
  \end{multlined}
\end{equation}
}
The attack performance is evaluated by post-attack model accuracy,

\begin{equation}
	\eta_\epsilon = \frac{|| \{T(X) \, | \, \arg \max f(T(X))=c^{*}(X)\}||}{||\mathbb{X}||}.
\end{equation}

%\tifs{eta!.}Index $\eta_{\epsilon}=\frac{|\mathbb{B_{\epsilon}}|}{|\mathbb{X}|}$ is defined for measuring the attack performance, where $|\cdot|$ means the number of elements in a set.
% Given an input of pointcloud data $x$,and a pointcloud classification model $f(x)$,by modifying x with the restriction $\epsilon$ of $L_{\infty}$,our goal is to confuse the network to output the wrong label while it can label it correctly without the modification.

\newtheorem{proposition}{Proposition}
%\counterwithin*{proposition}{section}

\subsection{Pointwise Gradient Method} \label{sec:pg}

Inspired by the success of gradient-guided attack methods (e.g., FGSM \cite{FGSM}) on natural images, we first generate the adversarial point clouds by pointwise gradient guided perturbation. \tifs{In this method, the adversarial point clouds are generated by a small perturbation on each point.} Given attack budget $\epsilon$ under the Chamfer distance (Eq. \ref{eq:chamfer-dist}), an adversarial \tifs{example} is obtained by

\begin{equation}
	X_a = T_{PG}(X),\quad s.t.\mathcal{D}_C(X_a, X)<\epsilon,
\end{equation}
where $T_{PG}(x)$ is an iterative perturbation based on pointwise gradient descent.
To decrease the model maximum output in the untargeted-attack setting, we obtain its gradient on the input, via backward pass of output of the ground truth class,

\begin{equation} \label{eq:cal_gradient}
	\nabla X_a = \frac{\partial{f^{(t)}(X_a)}}{\partial{X_a}}, \quad t=c^{*}(X).
\end{equation}

% \tifs{very important, how about the distance?!

% One classical attack method toward point cloud data is to exert perturbation to every point with the restriction of perturbation degree:
% \begin{equation}
% T(x)=\{\xi (t)|t\in x \},\quad s.t.\frac{1}{|x|}\sum_{t\in x}\frac{||\xi(t)-t||_{l}}{||t||_{l}}<\epsilon
% \end{equation}
% where $T_{PG}(x)$ is gradient-based learning method．}

For FGSM, an adversarial \tifs{example} is obtained by:
\begin{equation}
	X_a^{(n+1)} = X_a^{(n)} - \alpha \text{sign}{\nabla X_a^{(n)}},
\end{equation}
where function $sign(x)=\mathbf{1}(x\geq 0)-\mathbf{1}(x<0)$ ensures that the magnitude of the single-step change is maximized with the $\epsilon$-restriction under certain distance $\mathcal{D}$.

In our experiments, we use a different iteration formula based on $l_2$-normalized gradients \cite{L2Norm},
\begin{equation} \label{eq:iteration}
	X_a^{(n+1)} = X_a^{(n)} - \alpha \frac{\nabla X_a^{(n+1)}}{l_2(\nabla X_a^{(n+1)})},
\end{equation}
which leads to a better attack performance and more stable convergence. We call the iterative attack with vanilla gradient descent (Eq. \ref{eq:iteration}) as Pointwise Gradient (PG) Method.

The following proposition declares a theoretical guarantee for universal feasibility of the Pointwise Gradient Method. \tifs{Proof is provided in Appendix \ref{sec:proposition-proof}.}

\begin{proposition}
	Given any point cloud dataset $\mathbb{X}$, $\exists{\epsilon} \in \mathbb{R}$, $\exists{X \in \mathbb{X}}$ s.t. $\arg\max f(X)=c^{*}(X)$, $\exists{X_a}=T_{PG}(X): $ $\mathcal{D}_C(X,X_a)<\epsilon$ and $\arg\max f(X_a)\neq c^{*}(X) $.
\end{proposition}

Apart from vanilla gradient descent, the iteration enhanced with momentum is shown to be more effective on producing transferable adversarial \tifs{examples} \cite{dong2018boosting}. We thereby introduce a Momentum-Enhanced Pointwise Gradient (MPG) Method. A momentum factor $\mu$ is introduced to accumulate the gradient, which is typically set to be $1$ in our experiments,

\begin{equation}
g^{(n+1)} = \mu g^{(n)}+\nabla \frac{X_a^{(n)}}{l_2(\nabla X_a^{(n+1)})}, g^{(0)} =0.
\end{equation}

We then apply the accumulated gradient $g^{(n+1)}$ on the victim sample, 

\begin{equation}
X_a^{(n+1)} = X_a^{(n)} - \alpha g^{(n+1)}.
\end{equation}

In our experiments, the MPG method improve the transferability of adversarial
point clouds over PG.

\subsection{Point-Detach Method}
Pointwise Gradient Method achieves \tifs{a} high success rate, while it adds perturbation to every point, which is hardly possible in practice. We investigate a more realistic scenario by detaching a few points; in physical world, point vanishing is common in 3D sensing, due to occlusion and scale issues. Although prior works \cite{qi2016pointnet,pointnetplusplus,pointcnn} declare robustness on various numbers of input points, we consider this scenario in an adversarial setting by removing some points. Since the Chamfer distance does not measure the points missing from the original set (i.e., the asymmetric Chamfer distance $\mathcal{D}_C \equiv 0$ when detaching points, and the symmetric one is insensitive in that case), we define the attack budget as the number change measurement $\Delta N $ (Eq. \ref{eq:number-change}), thus an adversarial point-detach \tifs{examples} is denoted as 
\begin{equation}
	X_a = T_{PD}(X)\subseteq X,\quad s.t.\Delta N_{X,X_a} \leq N_d,
\end{equation}
where $T_{PD}$ defines a point-detach perturbation.

For PointNets \cite{qi2016pointnet}, we develop a point-detach strategy utilizing the {\em critical point} property (see Section \ref{sec:dl-on-pc}). Recall that the model output changes if and only if the missing point is one of the critical points. 
As the critical points bounded by the $K$-dimension max-pooling layer are \tifs{not relevant with the last several fully-connected and softmax layers whose function is to output the probability of each class}, to efficiently achieve the untargeted attack, we define a {\em class-dependent importance} $\mathcal{I}^{(i)}$ via Taylor first-order approximation of point $i$'s contribution. 

Denote $H_X \in \mathbb{R}^{N \times K}$ as neural networks features with input $X$, before max-pooling aggregation, its gradient matrix $w.r.t.$ the true class output is

\begin{equation}
	\nabla H_X = \frac{\partial}{\partial H_X}f^{(t)}(X),\quad t=c^{*}(X).
\end{equation}
Note that $\nabla H_X \in \mathbb{R}^{N \times K}$ is a sparse matrix with non-zero only at the critical points, i.e.,
\begin{equation}
	\nabla H_X^{(i,j)} \neq 0, \, iff \, i = \mathop{\arg\max}_{k}H_X^{(k,j)} \,for \,j=1,..,K.
\end{equation}

To count the value change $\Delta$ of channel $i$ if its critical point is detached, we introduce a {\em substitute vector} $G \in \mathbb{R}^{K}$, where $G^{(j)}$ is the second largest value the channel $j$ (the $j_{th}$ column of $H$). We then define

\begin{equation}
	\Delta^{(j)} = \max_i H^{(i,j)} - G^{(j)}.
\end{equation}

Thereby, using a first-order Taylor approximation, the class-dependent importance of point $i$ is,
\begin{equation}
	\mathcal{I}^{(i)} = \sum_j^K \nabla H_X^{(i,j)} \cdot \Delta^{(j)}.
\end{equation}

To confuse the attacked network, we apply a greedy strategy, by iteratively detaching the most important point dependent on the true class, until $N_d$ points are detached. In every iteration, the importance order of the remaining points may change. For this reason, we re-compute the class-dependent importance for each remaining point for every iteration, which makes our point-detach method an $O(N\cdot N_d)$ algorithm.

\subsection{Point-Attach Method}
Similarly, by attaching a few points at appropriate positions, we expect another variant of adversarial attack on point clouds,
\begin{equation}
	X_a = T_{PA}(X)\supseteq X.
\end{equation}
where $T_{PA}$ is attaching some points to the original point cloud data with the restriction:
\begin{equation}
	\Delta N_{X,X_a} \leq N_a, \quad \mathcal{D}_C(X_a,X)<\epsilon.
\end{equation}
Define $X_a=X\cup X'_a$, where $X'_a$ is the attached points, thereby $\mathcal{D}_C(X'_a,X)=\mathcal{D}_C(X_a,X)$, and $||X'_a||\leq N_a$.
We initialize $X'_a$ randomly, following Eq. \ref{eq:cal_gradient}, we replace $\nabla X$ by,
\begin{equation}
	%   L = f^t(X)+\lambda*\mathcal{D}_c(X,X_a); 
	\nabla X'_a = \frac{\partial{[f^{(t)}(X_a)+\lambda*\mathcal{D}_c(X,X'_a)]}}{\partial{X'_a}}.
\end{equation}
Where $\lambda*\mathcal{D}_c(X,X'_a)$ is an Lagrange multipier to restrict the attached points to move around surface of point cloud objects. In our experiments $\lambda=0.001$ empirically. We use Eq. \ref{eq:iteration} to only update the attached points without changing the original point cloud. This iteration stops till exceeding the adversarial budgets. 

The proposition below further provides a theoretical guarantee for our Point-Attach Method given point cloud datasets. \tifs{Proof is provided in Appendix \ref{sec:proposition-proof}.}
\begin{proposition}
	Given any point cloud dataset $\mathbb{X}$, $\exists{\epsilon} \in \mathbb{R}$, $\exists{N_a} \in \mathbb{N}$, $\exists{X \in \mathbb{X}}$ s.t. $\arg\max f(X)=c^{*}(X)$, $\exists{X_a}=T_{PA}(X): $ $\mathcal{D}_C(X,X_a)<\epsilon$, $\Delta N_{X,X_a} \leq N_a$, and $\arg\max f(X_a)\neq c^{*}(X) $.
\end{proposition}

% \begin{proposition}
% Given any point cloud dataset $\mathbb{X}$, $\exists{\epsilon}, \exist{N_a} \forall{X \in \mathbb{X} \cap f(X)=f^{*}(X)}, \exists{X_a}=T_{PA}(X), s.t.D_c(X,X_a)<\epsilon \cap |X_a|-|X|=N_a \cap f(X_a)\neq f^{*}(X_a) $
% \end{proposition}
\newtheorem{assumption}{Assumption}
%\counterwithin*{assumption}{section}

\section{Defense Methods}

\subsection{Principle and Notation}
Given a test sample $X\in \mathbb{R}^{N\times c}$, unknown whether it is an adversarial \tifs{example} $X\in \mathbb{B}$ or a \tifs{benign sample} $X\in \mathbb{X}$, our defense methods are supposed to detect $X\in \mathbb{B}$ and reject the adversarial point clouds. 

Observed that the outputs of adversarial point clouds are less stable than the natural ones facing small perturbation, we assume the adversarial \tifs{examples} exist in a narrow and very structured sub-space in a high-dimension input space; in other words, by perturbing inputs non-directionally, we expect the attack to be less aggressive with help of the intrinsic robustness of the attacked models.

To this regard, we propose several perturbation methods, which are related to {\em input transformation} \cite{guo2017countering, xie2017mitigating} and {\em stochastic activation} \cite{dhillon2018stochastic} methodology on natural images. However, we do not use the final output given the perturbation. For small perturbation, the predictive class does not change in most cases; besides, a perturbation-only defense may be mitigated since the ``obfuscated gradients give a false sense of security" \cite{athalye2018obfuscated}. Instead of utilizing the perturbation as a direct defense, we argue that statistics of the outputs provide rich information to detect the adversarial \tifs{examples}.

Our defense framework follows a {\em perturbation-measurement} principle, which applies perturbation methods $\mathcal{P}(\cdot)$ on $X$ multiple times before measuring particular statistics of outputs, and detects the adversarial \tifs{examples} by thresholding the statistics. We define an $M$-times pertubation input set of $X$ as,
\begin{equation}
	X'_m=\{X'_1,X'_2...X'_i...X'_M \,| \,  X'_i=\mathcal{P}_i(X)\}.
\end{equation}
then an $M$-times perturbation output set is defined as,
\begin{equation}
	O'_m=\{O'_1,O'_2...O'_i...O'_M \,| \,  O'_i=f(X'_i)\},
\end{equation}
where $f$ denotes a PC-Net (PointNet \cite{qi2016pointnet} in our experiments). As will be shown later, the distributions of $O'_m$ are very different between \tifs{benign samples} and adversarial \tifs{examples}. We compute certain statistics over $O'_m$, to capture the difference, and report several metrics as \cite{liang2017enhancing} to evaluate the adversarial \tifs{examples} detection performance.

\paragraph{AUROC} 
AUROC is the Area Under the Receiver Operating Characteristic curve \cite{fawcett2006introduction}, a.k.a. AUC, which serves as an evaluation method widely used in binary classification. Note AUROC is threshold-free and insensitive to class imbalance. In our setting, AUROC measures the separability of the adversarial \tifs{examples} and the natural \tifs{examples}. As we use imperfect PC-Nets in the defense, we consider two situations for fair evaluation: 1) $All$: the AUROC between adversarial \tifs{examples} and {\bf all} natural \tifs{examples} in the test set, and 2) $Correct$: the AUROC between adversarial \tifs{examples} and only the {\bf correctly classified} natural \tifs{examples} in the test set.

\paragraph{Defense Detection Rate (DDR) (@ $t\%$)} DDR measures the detection sensitivity at a threshold where the specificity is $1-t\%$, appearing as a single point on the ROC curve. In other words, it measures how many adversarial \tifs{examples} are detected when $t\%$ of natural \tifs{examples} is incorrectly rejected. Note we define the adversarial \tifs{examples} as the positive class.

In the following sections, we instantiate the framework into several perturbations and measurements, to adapt to different attack scenarios.

\subsection{Perturbation Methods} 
\label{defense:perturbation}

\paragraph{Gaussian Noising}
As common practices \cite{vincent2008extracting, DCN} for robustness test in machine learning, we first add Gaussian noises $\rho$ to $X$, named \textit{Gaussian Noising Method} $\mathcal{G}_i^\sigma(\cdot)$.
\begin{equation}
	\mathcal{P}_i(X)=\mathcal{G}_i^\sigma(X) \triangleq X+\rho_i, \quad s.t.\ \rho_i \sim \{N(0 ,{\sigma ^2})\}_{N\times c}.
\end{equation}
The noises are i.i.d. sampled from a Gaussian distribution $N(0 ,{\sigma ^2})$. Adding non-directional Gaussian noise to $X$ helps the attacked models to escape from the narrow adversarial sub-space, which enables an effective follow-up detection.
%plays the same role as moving each point of the point cloud slightly and randomly from its original position, which effectively weaken the attacking direction of some gradient-based attacking methods (like FGSM by Szegedy et al \cite{FGSM}). 
% After adding Gaussian methods for $M$ times, we get a set $s'_m=\{s'_1,s'_2...s'_M\}$, which reflects some statistics of the point clouds.

\paragraph{Quantification}
Motivated by the fact that adversarial perturbations are by definition small in magnitude, we define a \textit{Quantification Method} $\mathcal{Q}_i^\mu(\cdot)$ to convert the inputs into low numerical precision with multiple quantification levels, 
\begin{equation}
	\mathcal{P}_i(X)=\mathcal{Q}_i^\mu(X)\triangleq \lfloor X\times \frac{M}{\mu \times i} \rfloor \times (\frac{\mu \times i}{M}),
\end{equation}
where $\mu$ defines as a max quantification level. For $i=1,...,M$, quantification level ranges from $\mu / M$ to $\mu$. With larger quantification level, $X'$ appears more distorting. PC-Nets are robust to natural \tifs{examples} with a degree of distortion, while we observe that adversarial \tifs{examples} are vulnerable to quantification, resulting in chaotic distributions of outputs from the classifier and distinguishable statistics.

\paragraph{Random Sampling}

The above perturbations are defined on the Euclidean space, we then define a perturbation on the number of points changed $\Delta N$. For $X$ contains $N$ points, we randomly sample $n$ ($n < N$) points $s_i$ from $X$ without replacement, named \textit{Random Sampling Method} $\mathcal{S}_i^n(X)$.
\begin{equation}
	\mathcal{P}_i(X)=\mathcal{S}_i^n(X)=\{\mathbbm{1}_x x \,|\,x \in X,\, \mathbbm{1}_x \sim Ber(0.5)\},
\end{equation}
where $\mathbbm{1}_x$ is sampled from $Bernoulli(0.5)$ distribution to indicate the existence of point $x$ in the post-sampled set. As expected, the sampling perturbation is very effective against the point-attach attacks (Section \ref{sec:defense-performance}). 
%   \sum_{p\in X}\mathbbm{1}\times p,\quad s.t.\ n\leq N,

%  In which $p$ is the point belongs to point cloud $X$. $\mathbbm{1}$ denotes that by the possibility of $\frac{n}{N}$ the point $p$ is retained, otherwise $p$ will be abandoned.

%  Random sampling method \tifs{examples} points in $X$ in probabilistic, which randomly abandons points that are effective in attacking period and weakens the attacking direction.

\subsection{Measurement Methods}
Given a perturbation output set $O'_m$, for each $O'_i \in O'_m$, 
\begin{equation}
	O'_i =\{o'_{i1},o'_{i2}...o'_{ij}...o'_{iN_c}\},\quad s.t. \ o'_{ij} \in [0,1],
\end{equation}
where $o'_{ij}$ denotes the confidence score of class $j$, and $N_c$ is the number of output classes.
% Each $O'_i$ in $O'_m$ has $N_c$ elements which are confidences of each class.
%  $$\begin{aligned}
%  O'_m=\{O'_1,O'_2...O'_i...O'_M\},\quad &s.t.\ O'_i \in \mathbb{R}^{N_c}\\
%  O'_i=\{o'_{i1},o'_{i2}...o'_{ij}...o'_{iN_c}\},\quad &s.t. \ o'_{ij} \in [0,1]
%  \end{aligned}$$

\paragraph{Set-Indiv Variance Measurement}
As observed that the adversarial \tifs{examples} destabilize the model outputs, we first consider to measure the diversity of confidences on the output classes, instead of using an entropy-based measurement, we find $variance$ is more distinguishable in practice, thereby a \textit{Set-Indiv Variance Measurement} $\mathrm{SIV}(\cdot)$ is defined as
\begin{equation}
	\mathrm{SIV}(O'_m)=\frac{1}{N_c}\sum_{k=1}^{N_c}\mathrm{Var}_{i \in {1,2,...M}}(o'_{ik}).
\end{equation}

The variance of each class's confidence set is computed before averaged to $\mathrm{SIV}$ measurement, which is empirically the most effective in most cases (Section \ref{sec:defense-performance}).
%  In order to reflect the output distribution difference between $X\in \mathbb{B}$ and $X\in \mathbb{X}$, we design three measurement methods to get three statistics. The first measurement method, which is empirically the most effective, is called \textbf{Set-Indiv Variance Measurement} $\mathrm{SIV}(\cdot)$.
%  \begin{equation}
%  \mathrm{SIV}(O'_m)=\frac{1}{N_c}\sum_{k=1}^{N_c}(\mathrm{Var}_{i \in {1,2,...M}}(o'_{ik}))
%  \end{equation}
%  $\mathrm{SIV}$ Method firstly compute the variance of each class's confidence set and get a $N_c$ elements vector $V_{n_c}=\{\mathrm{Var}_{j=const}(o'_{ij})\}$. Then compute mean of the vector $\mathrm{SIV}(O'_m)=\frac{1}{N_c}\sum_{j=1}^{N_c}(V_{n_c})$. $\mathrm{SIV}$ reflect the degree of chaos of output $O_m$. The outputs of $X'\in \mathbb{B}$ tend to perform a more chaotic distribution than that of $X'\in \mathbb{X}$.

\paragraph{Max Confidence-Based Measurement} 
\label{defense:confidence}
Inspired by prior works on out-of-distribution detection \cite{hendrycks2016baseline, liang2017enhancing}, we propose to use the max confidence score to detect adversarial \tifs{examples}. In our {\em perturbation-measurement} framework, we use the max confidence scores statistically. 

We define a \textit{Confidence Average Measurement} $\mathrm{CoA}(\cdot)$,
\begin{equation}
	\mathrm{CoA}(O'_m)=\frac{1}{M}\sum_{i=1}^{M}(\max \limits_{j \in {1,2,...N_c}}(o'_{ij})),
\end{equation}
which measures the average of max confidence scores.

Besides, we also measure the variance of max confidence scores, by designing \textit{Confidence Variance Measurement} $\mathrm{CoV}(\cdot)$,
\begin{equation}
	\mathrm{CoV}(O'_m)=\mathrm{Var}_{i \in {1,2...M}}(\max \limits_{j \in {1,2...N_c}}(o'_{ij})).
\end{equation}

With the following two assumptions, we provide a theoretical guarantee for the effectiveness of Max Confidence-based Measurement (i.e., $CoV$ and $CoA$). \tifs{Proof is provided in Appendix \ref{sec:proposition-proof}.}

\begin{assumption}
	PC-Nets are local continuous convex or concave functions around natural \tifs{examples}. 
\end{assumption}

\begin{assumption} The proportion of adversarial \tifs{examples} in the local area around natural \tifs{examples} is small enough, i.e., given natural \tifs{examples} $X_n$, 
	\begin{align*}
		&\exists{\delta}>0, \text{define} \, \mathbb{D}_{\delta}=\{X \,|\,\mathcal{D}_{C}(X,X_n)<\delta\}, \\&\mathbb{B}_\delta= \{X_a \, is \, adversarial \,\tifs{examples} \,|\, \mathcal{D}_C(X_a,X_n)<\delta\} \subset \mathbb{D}_\delta,
		\\&\exists{\epsilon} \ll 1: \forall{X}, P(X\in\mathbb{B}_\delta)/P(X\in\mathbb{D}_\delta)<\epsilon .
	\end{align*}
\end{assumption}

\begin{proposition}
	Given any sample $X$, it can \textbf{always} be detected whether it is an adversarial \tifs{example} or a \tifs{benign sample} by Max Confidence-Based Measurement  ($CoV$ for Convex functions and $CoA$ for Concave functions), with Gaussian Noising or Quantification perturbation.
\end{proposition}

\tifs{
\subsection{Attack over the Defenses}

Observing the literature on adversarial attack and defense, there is never a defense technique strong enough in a long term \cite{Carlini2016TowardsET,tsipras2018robustness,grosse2018limitations}. To further evaluate the robustness of the proposed perturbation-measurement defense methods, we design a proof-of-concept attack targeting the proposed defense methods. Considering the connection between randomness-based defense \cite{xie2017mitigating} and the proposed defense strategy, we develop a variant of Pointwise-Gradient attack, aimed at evading the proposed defense. We name this attack method as Expectation-over-Transformation Pointwise-Gradient (EoTPG) attack, since it is inspired from Expectation over Transformation (EoT) technique \cite{athalye2018obfuscated,pmlr-v80-athalye18b}. It is also very related to the PGD method \cite{madry2017towards}. Essentially, each iteration of the EoTPG attack uses the average gradient by perturbed samples from the mentioned perturbation. 

Modified by the vanilla Pointwise-Gradient attack, each step of EoTPG attack uses the following equation:

\begin{equation} \label{eq:aod-iteration}
	X_a^{(n+1)} = X_a^{(n)} - \alpha \frac{E(\nabla \mathcal{P}_i(X_a^{(n+1)}))}{l_2(E(\nabla \mathcal{P}_i(X_a^{(n+1)})))},
\end{equation}
\begin{equation}
    E(\nabla \mathcal{P}_i(X_a^{(n+1)}))=\sum_i^M(\frac{\partial{f^{(t)}(\mathcal{P}_i(X_a))}}{\partial{\mathcal{P}_i(X_a})})/M,
\end{equation}
where $\mathcal{P}$ is one of the proposed perturbation (Gaussian Noising, Quantification or Random Sampling) used in the defense methods, and $M$ denotes the times of perturbation in each attack step. $M$ is simply set $100$ in our experiments.

Different from the effectiveness of Expectation-over-Transformation techniques \cite{athalye2018obfuscated} on randomness-based defense methods \cite{xie2017mitigating} on natural images, our proposed defense techniques are surprisingly effective to detect the adversarial point clouds generated by the attack method targeting the defense (Section \ref{sec:aod=results}), which empirically proves the robustness of the propose defense methods on detecting adversarial point clouds.
}

\section{Results}
In this section, we evaluate the attack performances of our methods on a simple yet effective PC-Net, PointNet (with T-Net) \cite{qi2016pointnet}. At the same time, we conduct defense experiments on these adversarial point clouds with positive results. All results are reported on ModelNet40 dataset \cite{wu20153d} of 40-category CAD models if not specified otherwise. We use the official split with 9,843 \tifs{examples} for training, and 2,468 \tifs{examples} for test / attack / defense. 1,024 points are uniformly sampled from the mesh surfaces as in PointNet. 
%We have attacked two classical point cloud classification network:  and PointNet++ \cite{pointnetplusplus} . At the same time, we have done the experiments to defend them and have achieved good performance to some extent.

\subsection{Attack Performance}
In order to verify the performance of our attack methods, extensive experiments are conducted on the ModelNet40 dataset. Figure \ref{fig:add} illustrates several \tifs{examples} attacked by the proposed methods, \tifs{and we display the adversarial point clouds generated by Pointwise-Gradient method under various Chamfer distances in Figure \ref{fig:attack_vs_chamfer}.} As demonstrated, the generated adversarial point clouds are visually indiscernible. For Pointwise-Gradient method and Point-Attach method, unnoticeable perturbation is applied on each attacked point, in case that the layout is destroyed. For Point-Detach method, the detached points are the critical points, which are often located on the edges and corners of the point clouds. In physical environments, these points are more likely to be vanishing for 3D sensing due to occlusion and scale issues.

\begin{figure}[!htb]
	\centering
	\includegraphics[width=\linewidth]{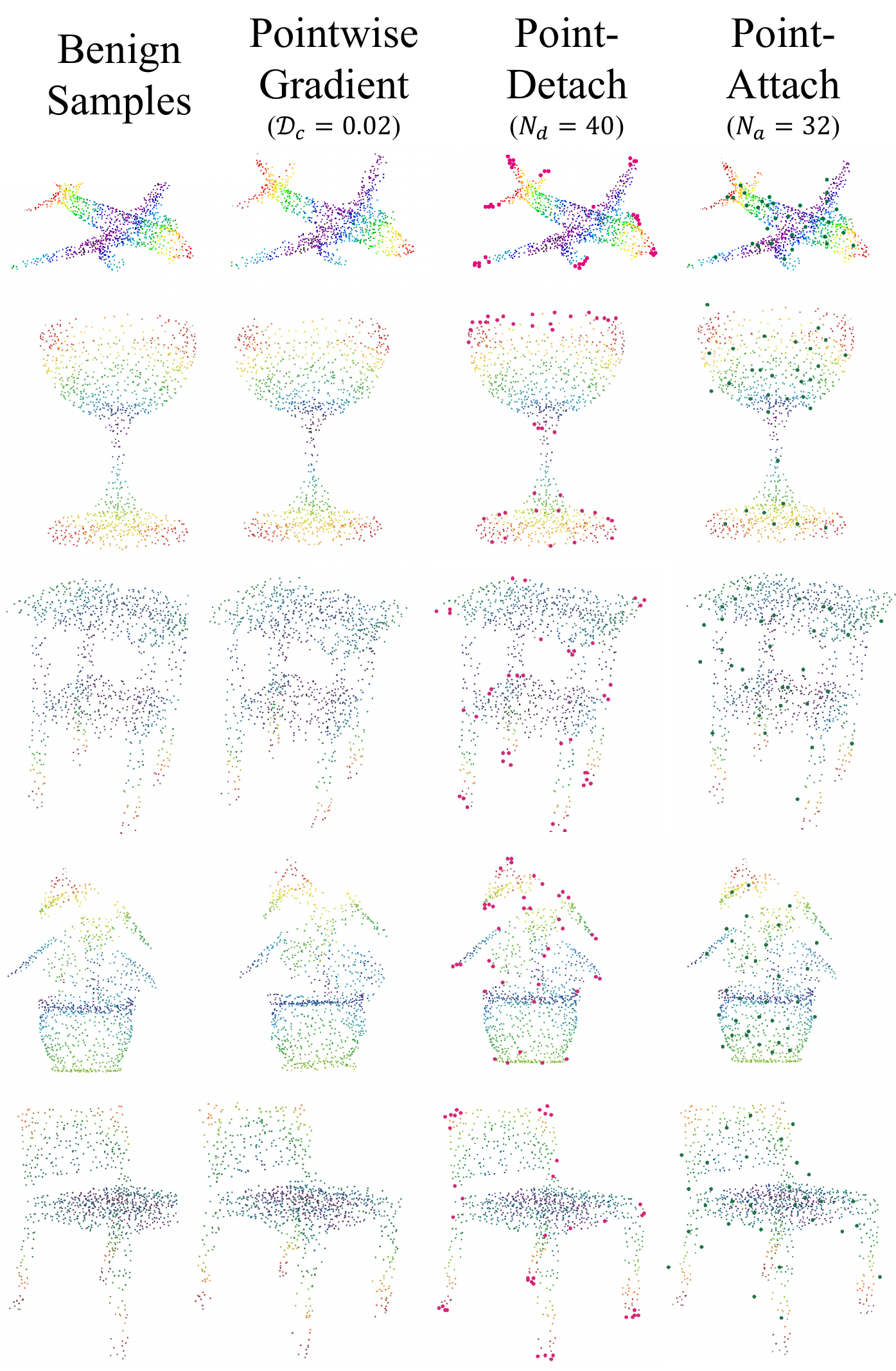}

	\caption{\textbf{Illustration of natural point clouds and adversarial point clouds produced by the proposed attack methods.} Point clouds by Pointwise Gradient method are generated with $\mathcal{D}_c = 0.02$. For Point-Detach method, 40 red points denotes the detached points. For Point-Attach method,  32 attacked points colored green are attached.}
	\label{fig:add}
\end{figure} 

\begin{table}

		\caption{\textbf{Attack performance on ModelNet40 dataset.} Metric ($\eta$) is the accuracy (\%) on classification after attacking.}
	
	\centering
	\begin{tabu}to 0.47\textwidth{X[c]X[c]X[c]X[c]X[c]X[c]X[c]X[c]}
		\toprule[1pt]
		\multicolumn{7}{c}{Pointwise Gradient Method}\\ \hline
		%  		lr & 0.001 & 0.001 & 0.001 & 0.01 & 0.01 & 0.01 & 0.01 \\ 
		% 	dist & 0.01 & 0.03 & 0.05 & 0.02 & 0.03 & 0.04 & 0.05 \\ 
		$\mathcal{D}_C$ & 0.01  & 0.02  & 0.03  & 0.04  & 0.05  & 0.06\\
		$\eta$ & 86.58 & 59.21 & 39.38 & 29.25 & 9.25  & 0.00\\\hline
		% $\eta$ 	& 4.56 & 54.92 & 84.53 & 35.33 & 54.16 & 64.86 & 73.71 \\ \hline
		
		\multicolumn{7}{c}{Point-Detach Method}\\ \hline
		% 	Add & \  & \  & \  & \  & \  & \  & \  \\ \hline
		$N_d$ & 8 & 12 & 20 & 40 & 100 & 200 \\  
		$\eta$  & 87.54 & 86.53 & 83.89 & 78.89 & 69.32 & 67.58 \\ \hline
		\multicolumn{7}{c}{Point-Attach Method}\\ \hline
		% 	Remove & \  & \  & \  & \  & \  & \  & \  \\ \hline
		$N_a$ & 16 & 16 & 32 & 32 & 100 & 100 \\ 
		$\mathcal{D}_c$ & 0.1 & 0.2 & 0.3 & 0.4 & 0.5 & 2.0 \\ 
		$\eta$  & 86.82 & 85.14 & 84.26 & 83.89 & 80.28 & 75.12 \\ 
		\bottomrule[1pt]
	\end{tabu}

	\label{attack_form}
\end{table}

The attack performance is depicted in Table \ref{attack_form}, where the attack parameters correspond to different attack intensities. By applying the proposed Pointwise Gradient method, the accuracy of PointNet could be reduced to $0\%$, which means that all natural \tifs{examples} are successfully attacked. %More visualization with Pointwise-Gradient attack with various attack intensities is depicted in Appendix \ref{appendix-vis}.

The Point-Detach method and Point-Attach method, are less effective than the Pointwise-Gradient attack in terms of accuracy reduction, however these attack are much more physically feasible, and these \tifs{examples} easily deceive human eyes \tifs{(see illustration of Point-Detach and Point-Attach in Figure \ref{fig:add})}. Under looser constraints, the attack performance can be even higher. It is worth noting that the adversarial PD and PA methods are much more effective than random point detachment / attachment. For illustration, we visualize (Figure \ref{fig:pd-attack}) the attack performance under PD attack with various detached points $N_d$. The victim PointNet is robust with random point detachment (with less than 200 detached points), however it is vulnerable in an adversarial setting. 

\begin{figure}[!htb]
	\centering
	\includegraphics[width=0.9\linewidth]{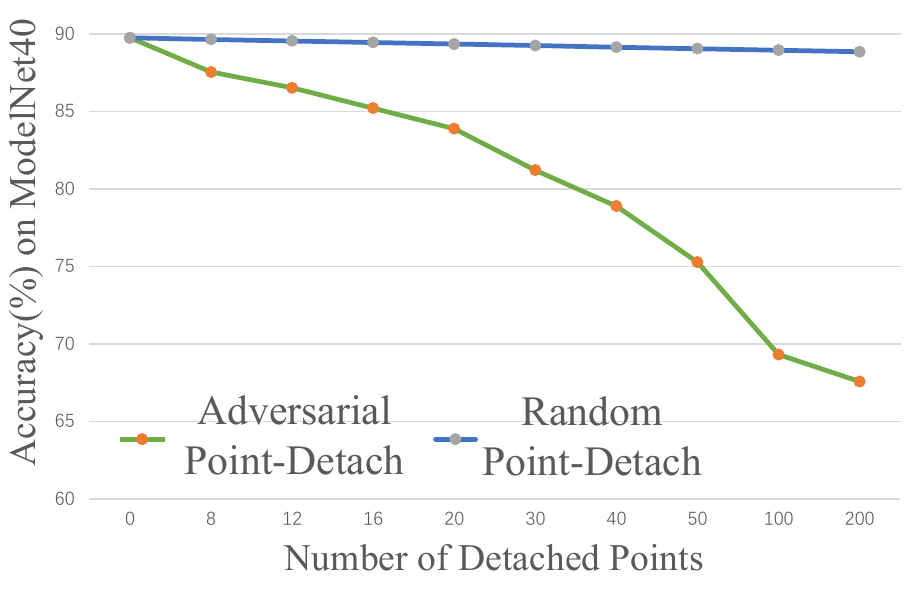}

	\caption{\textbf{Point-Detach attack performance on ModelNet40 versus detached points $N_d$.} Accuracy with random point detached are also visualized for comparison.}
	\label{fig:pd-attack}
\end{figure}

\subsection{Defense Performance}
\label{sec:defense-performance}

\begin{figure}[!htb]
	\centering
	\includegraphics[width=4cm]{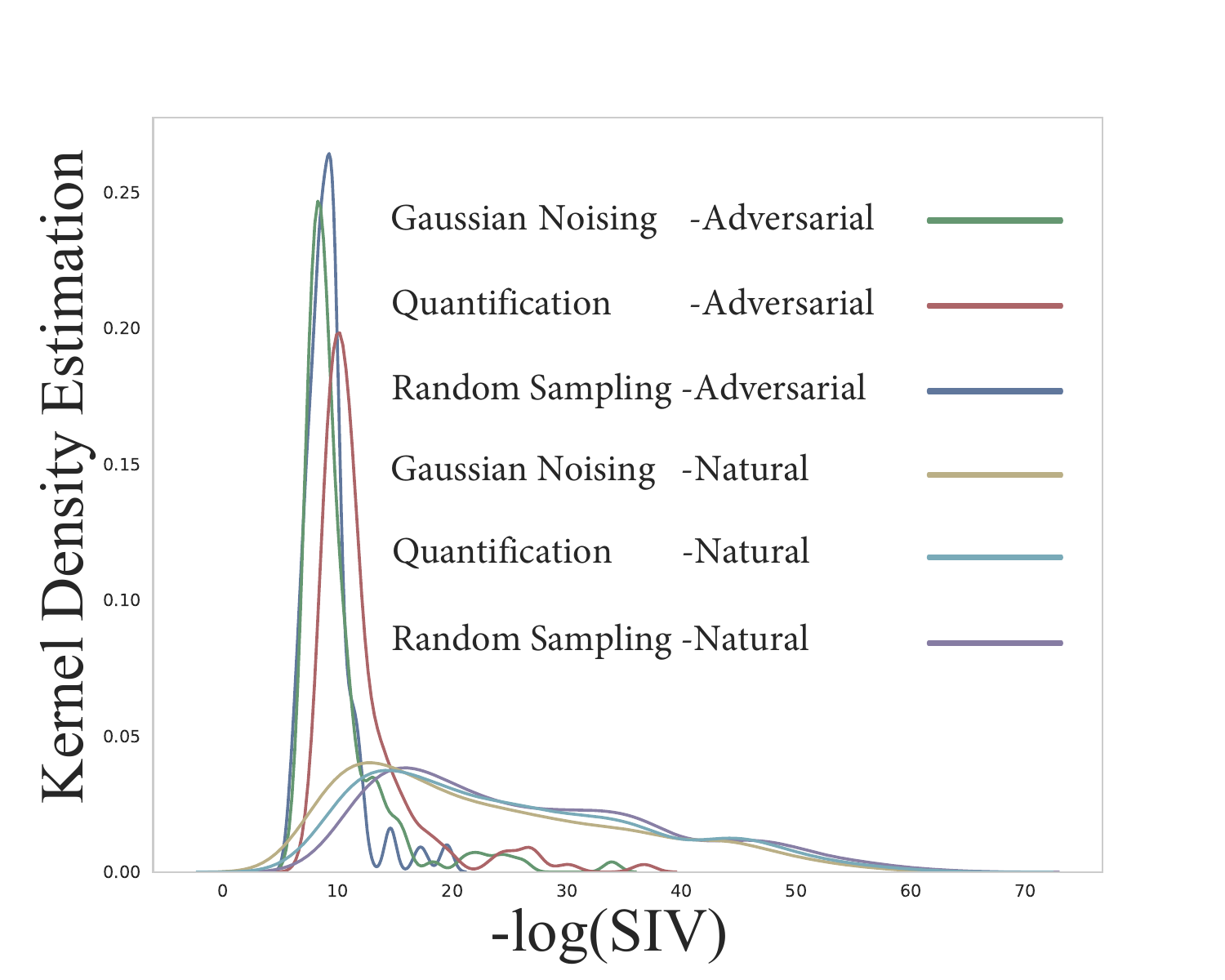}
	\includegraphics[width=4cm]{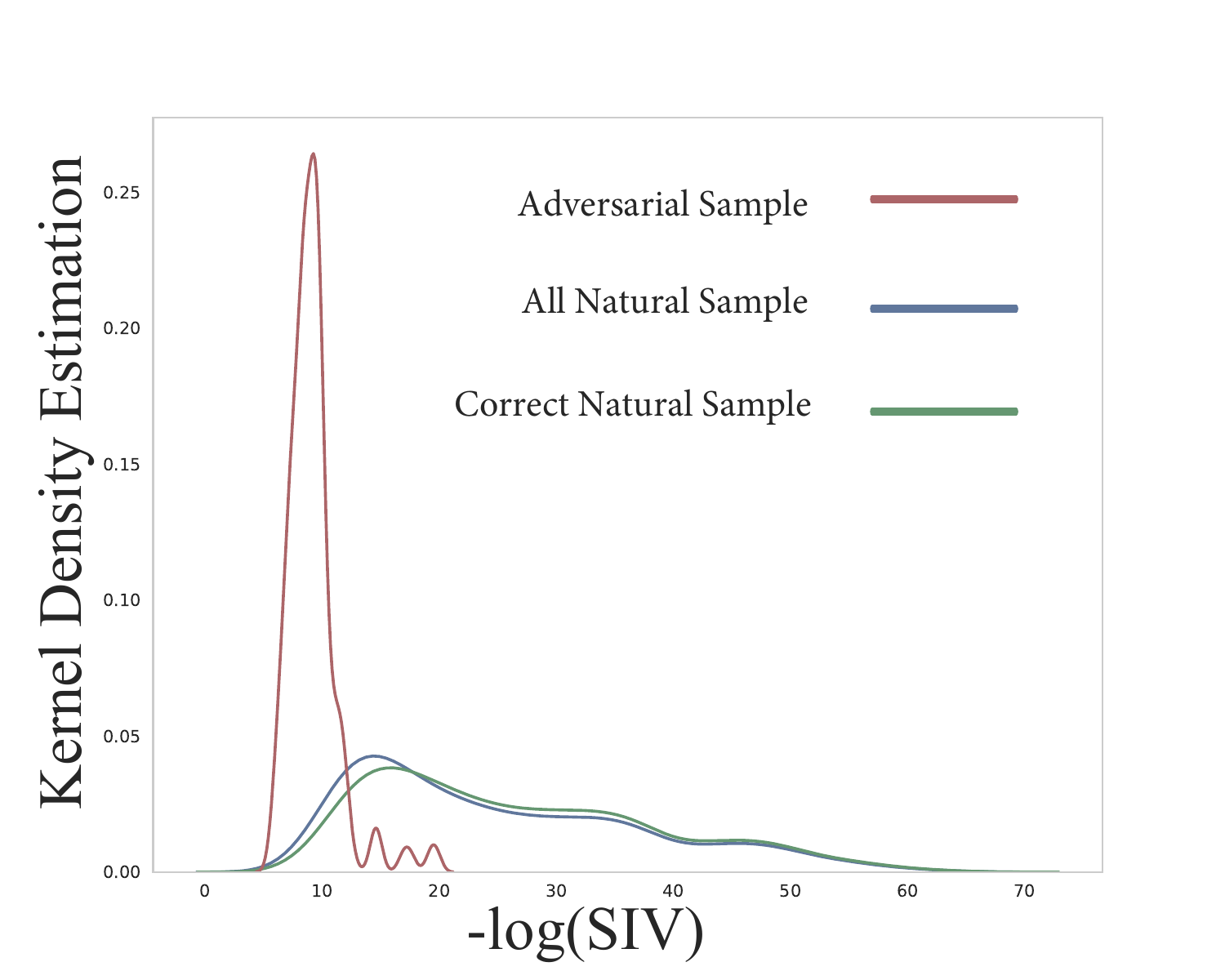}
	\caption{\textbf{Left}: Kernel density estimation (KDE) of $-log(SIV)$ towards 32-point Point-Attach method, which shows different statistics on distribution of adversarial \tifs{examples} and $Correct$ natural \tifs{examples}. \textbf{Right}: KDE with adversarial \tifs{examples}, $All$ natural \tifs{examples}, and $Correct$ natural \tifs{examples}, which reveals the Defense Error II in Section \ref{Discussion}.}
	\label{fig:defense}
\end{figure}

\begin{table*}
		\caption{\textbf{Defense performance with the best perturbation-measurement parameter setting on the proposed attack methods.} The Defense Detection Rates (DDR) are evaluated with $Correct$ natural \tifs{examples}.}
    	\begin{center}
		\begin{tabular}{p{2.7cm}p{1.9cm}p{2cm}p{0.9cm}p{1cm}p{1.6cm}p{1.7cm}p{2.2cm}}
			
			\toprule[1pt]
			\ \ \ Attack Method & \ \ \ \ \ Param. & \ \ \ \ \ Perturb. & Measur. & $\text{AUC}_{All}$ & $\text{AUC}_{Correct}$ & $\text{DDR (@5\%)}$ & $\text{DDR (@10\%)}$\\ \hline
			Pointwise Gradient &  \ \ $\mathcal{D}_C=0.02$ & Quanti. (0.08) & \ \ \  SIV &0.9168 &\ \ \ \ \   0.9381 & \ \ \ \ 61.34 & \ \ \ \ \ 80.78 \\
			\ \ \ \ Point-Detach &  \ \ \ $N_d={20}$ &\ \ GN. (0.012) & \ \ \ SIV &0.9229 &\ \ \ \ \ 0.9517  & \ \ \ \  65.69 & \ \ \ \ \  88.73 \\
			\ \ \ \ Point-Detach &  \ \ \ $N_d={40}$ &\ \ GN. (0.012) & \ \ \ SIV &0.9011 &\ \ \ \ \   0.9343 & \ \ \ \  54.26 & \ \ \ \ \  77.56 \\
			\ \ \ \ Point-Attach &  \ \ \ $N_a={32}$  &\ \ \  RS. (1000) & \ \ \  SIV &0.9729 &\ \ \ \ \   0.9816 & \ \ \ \  93.15 & \ \ \ \ \  95.21\\
			\bottomrule[1pt]
		\end{tabular}

		\label{form:defense}
	\end{center}
\end{table*}

Perturbation-measurement defense strategies are utilized to detect the adversarial \tifs{examples} produced by the proposed attack methods. We illustrate the effectiveness of measurement by the kernel density estimation (KDE) plot of $-log(SIV)$ of natural point clouds and adversarial point clouds, $-log(\cdot)$ is simply for visual purpose. As shown in Figure \ref{fig:defense}, the distributions of the natural \tifs{examples} and adversarial \tifs{examples} are significantly different, which implies that the adversarial \tifs{examples} could be detected with appropriate threshold.

We then quantitatively evaluate the performance of the proposed defense strategies on the adversarial attacks. AUROC and Defense Detection Rate (DDR) are used to measure the defense performance, as depicted in Table \ref{form:defense}, \tifs{where the best defense performance is illustrated with optimal hyperparameters. A complete defense performance for various settings are shown in the Appendix \ref{appendix:defense}.} Note that
two kinds of natural \tifs{examples} subsets are evaluated: 1) $All$ natural \tifs{examples}: the whole test dataset; 2) $Correct$ natural \tifs{examples}: the point clouds correctly classified by victim PC-Nets. Further discussion is given in Section \ref{Discussion} about these two settings.

The proposed defense strategies successfully detect most ($> 75\%$) adversarial \tifs{examples}. 
%  In our defense to Point-Detach attack method, defense methods do not perform well with $N_d$ increasing because our defense methods rely on the robustness of PC-Nets themselves. 
Given the fact that our defense methods strongly rely on the robustness of the victim PC-Nets, their performance drops with $N_d$ increasing in the defense of Point-Detach attack. When the critical points detached, the fluctuation of outputs decreases, resulting in a decrease of DDR. Though $CoA$ and $CoV$ are theoretically effective on detecting the adversarial \tifs{examples} (Section \ref{defense:confidence}), empirically they perform slightly worse than $SIV$. The complete performance comparison is reported in Appendix \ref{appendix:defense}.

%  \begin{table}[!htb]
%  \begin{center}
%  \begin{tabu} to 0.48\textwidth{X[c]X[c]X[c]X[c]X[c]}
%  \toprule[1pt]
%  Attack & Param. & Perturb.  & DDR (0.05) & DDR (0.1) \\ \hline
%  PG     & 0.02         & Quanti. (0.08)          & 61.34   & 80.78    \\ 
%  PD     & 20           & GN (0.02)   & 65.69  & 88.73   \\
%  PD     & 40           & GN (0.02)   & 54.26  & 77.56   \\
%  PA     & 32           & RS (1000)   & 93.15  & 95.21 \\
%  \bottomrule[1pt]
%  \end{tabu}
%  \caption{The table shows \textbf{defense detect rate (DDR)} of defense methods towards three attacking methods. It defines as \textbf{tpr / fpr=FPR}, which means the correct classification rate of adversarial \tifs{examples} when \textbf{FPR} of the natural \tifs{examples} are wrongly classified as adversarial \tifs{examples}. The three situations all choose SIV as Measurement.}
%  \end{center}
%  \end{table}

\subsection{Performance of Attack over the Defenses} \label{sec:aod=results}

\begin{table}

	\caption{Comparation of the accuracy ($\eta$) between original vanilla PG attack and EoTPG attack. The attack budge $\mathcal{D}_C$ is $0.05$.}
	
	\centering
	\begin{tabu}to 0.47\textwidth{X[c]X[c]X[c]X[c]X[C]}
		\toprule[1pt]
		 & PG  & EoTPG RS.  & EoTPG Quanti.  & EoTPG GN.  \\ \hline

		$\eta$ & 9.25 & 87.78 & 64.23 & 66.10 \\ 
		\bottomrule[1pt]
	\end{tabu}

	\label{form:eotpg}
\end{table}

\tifs{We applied Expectation-over-Transformation Pointwise- Gradient (EoTPG) attack method with random sampling (RS.), quantification (Quanti.), and Gaussian noising (GN.) perturbations. The EoTPG attacks are applied with $100$-time perturbations for each attack step. Table \ref{form:eotpg} shows the accuracy of PointNet after EoTPG attack method and vanilla PG attack method. The original accuracy of PointNet without any attack is 89.20\%. $\eta$ means the accuracy (\%) of PointNet after attacking, as the previously stated. The attack success rate of EoTPG declines sharply when compared with vanilla PG attack method, which means EoTPG method is harder than original PG method to produce adversarial examples. EoTPG method with random sampling perturbation is the most hardest to produce adversarial examples, whose accuracy only decrease by $1.42\%$. It might result from that the adversarial examples with different number of points have low transferability, which introduces very large variance of the gradients.}

\begin{table}

\caption{Defense performance (AUROC) towards PG and EoTPG attackn.}
	
\begin{center}
\begin{tabu} to 0.47\textwidth{X[c]X[c]X[c]X[c]X[c]X[c]}
\toprule[1pt]
\multicolumn{2}{c}{Defense$\backslash$Attack} & PG & EoTPG RS.       & EoTPG Quanti. & EoTPG GN.                    \\ \hline

\multirow{3}{*}{RS.} & SIV & 0.8469 & 0.9328 & 0.9050 & 0.8486  \\ 
& CoA & 0.7970 & 0.9444 & 0.8133 & 0.8016 \\
& CoV & 0.8429 & 0.9165 & 0.9022 & 0.8465 \\ \hline
\multirow{3}{*}{Quanti.} & SIV & 0.8593 & 0.8732 & 0.8383 & 0.8721  \\
& CoA & 0.8087 & 0.9444 & 0.8133 & 0.8016 \\
& CoV & 0.8493 & 0.9165 & 0.9022 & 0.8465 \\ \hline
\multirow{3}{*}{GN.} & SIV & 0.8267 & 0.9462 & 0.8593 & 0.8432 \\
& CoA & 0.7968 & 0.9444 & 0.8133 & 0.8016 \\
& CoV & 0.8240 & 0.9165 & 0.9022 & 0.8465 \\
			
			\bottomrule[1pt]
		\end{tabu}

		\label{form:defense_eotpg}
	\end{center}
\end{table}

\tifs{Table \ref{form:defense_eotpg} shows the defense AUROC towards vanilla PG attack method and EoTPG attack methods. The four columns refer to vanilla PG attack method, EoTPG with random sampling (of $500$ points), EoTPG with Quantification (of $0.12$), and EoTPG with Gaussian noising (of $0.025$). The EoTPG attacks are applied after $100$ times perturbations for each step. The rows refer to the AUROC of our defense methods with three perturbation methods and three measurement statistics. The parameters of the three defense perturbations are the same as that in EoTPG ($500$ points, $0.12$, and $0.025$ correspondingly). The attack budget $\mathcal{D}_C$ is $0.05$. The defense method is applied with adversarial examples and \textit{Correct} classified natural point clouds. The defense AUROC towards the corresponding EoTPG attack methods are expected to be lower than that towards vanilla PG method. However, in most cases, the defense performances towards corresponding EoTPG attack are better than that that of PG method, which means our defense methods work effectively towards the specifically designed EoTPG attack, even directly targeting the proposed defense strategy.}

\section{Transferability of Adversarial \tifs{Examples}}

In this section, we analyze the transferability of the adversarial \tifs{examples} between various PC-Nets, to address the black-box attack setting. Moreover, the transferability between point clouds and grid CNNs is also investigated, considering the importance of the CNN-based approaches. 

\subsection{Between Various PC-Nets}
Several PC-Nets, e.g., PointNet++ \cite{pointnetplusplus} and Dynamic Graph CNN (DGCNN) \cite{wang2018dynamic} are designed based on PointNet \cite{qi2016pointnet}. PointNet++ improves PointNet by introducing a hierarchical structure based on $k$-nearest neighbor (kNN) graphs, and DGCNN further replace the static kNN graphs by dynamically reconstruct the kNN graphs. \tifs{There are also some previous work discussing the transferability among adversarial examples obtained from different models: \cite{liu2016delving,su2018robustness}. In this study, we explore this issue by evaluating the attack performance and transferability on these PC-Nets.} The evaluated PC-Nets are trained with the official open source code. 

\paragraph{Attack Performance}

\begin{figure}[!htb]
	\centering
	\includegraphics[width=0.9\linewidth]{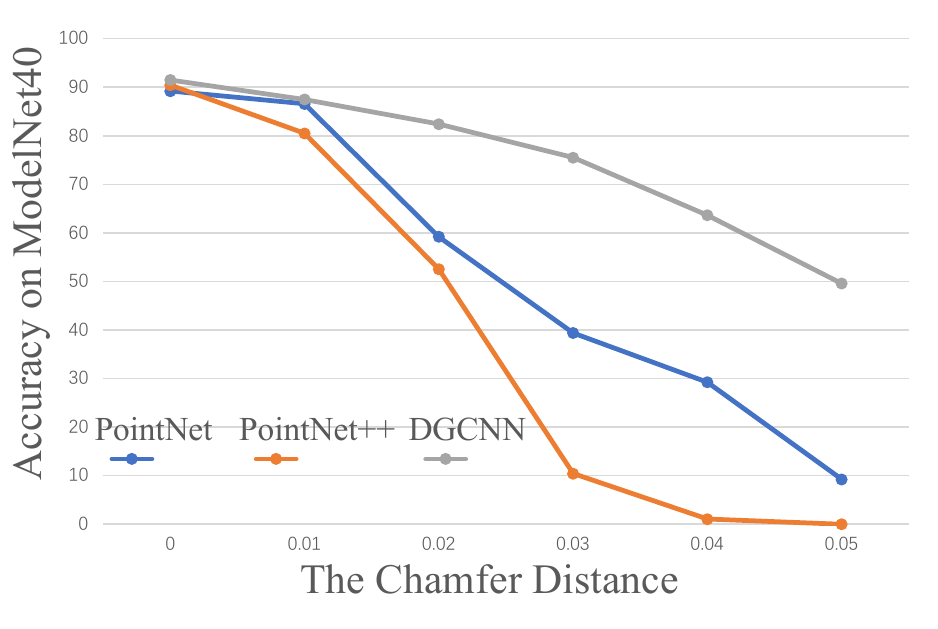}
	\caption{Attacked accuracy on ModelNet40 versus $\mathcal{D}_C$ on various PC-Nets.}
	\label{fig:various-pc-net}
\end{figure} 

We apply the Pointwise Gradient attack on these PC-Nets to evaluate their robustness to the adversarial point clouds. By tuning the attack intensities (the Chamfer distance $\mathcal{D}_C$), the accuracy of PC-Nets after attacking is shown in Figure \ref{fig:various-pc-net}.

Interestingly, PointNet++ performs worse in terms of robustness than PointNet. PointNet++ outperforms PointNet without attack. Please note that the static kNN graphs in the PointNet++ is constructed on the attacked \tifs{examples}. We conjecture that the static kNN graphs of natural \tifs{examples} built in the Euclidean space make the hidden representations lay in a narrow space. The kNN graphs of adversarial point clouds are differently structured, which weaken the robustness of the hidden representations. 

DGCNN is observed more robust than that of PointNet and PointNet++. The DGCNN enjoys highest performance without attacking. Besides, the dynamic edge convolution operation in DGCNN disturbs the gradient backward, which results in ``shattered gradients'' \cite{athalye2018obfuscated} and contributes to high robustness.

\paragraph{Transferability}
To evaluate transferability between these PC-Nets, we transfer adversarial \tifs{examples} generated by one source network with (vanilla) Pointwise Gradient method to \tifs{the ones generated by the other} evaluating PC-Net. As shown in Table \ref{attack_trans}, we evaluate the adversarial \tifs{examples} on the evaluating PC-Net to obtain \textit{post-attacking accuracy}, and the corresponding natural \tifs{examples} to obtain the \textit{pre-attacking accuracy}. 

The evaluation results indicate that poor attack transferability of the adversarial point clouds generated by the Pointwise Gradient attack. The high post-attacking accuracy implies, that only a few adversarial \tifs{examples} are successfully transferred to the evaluating networks. Pointwise Gradient method is not an effective black-box adversarial attack in point cloud cases.

\begin{table}
		\caption{\textbf{Transferability evaluation on Pointwise Gradient method (PG)}. Pre-attacking / post-attacking accuracy are depicted. \tifs{The attack budget $\mathcal{D}_C=0.03$.}}
		
    	\begin{center}
		%  \begin{tabular} {p{1.2cm}||p{0.7cm}|p{0.7cm}|p{0.7cm}|p{0.7cm}|p{0.7cm}|c{0.7cm}}
		\begin{tabular}{p{1.4cm}p{1.75cm}p{1.75cm}p{1.75cm}}
			\toprule[1pt]
			From $\backslash$ To& \ \ \ PointNet          & \ PointNet++        & \ \ \ DGCNN             \\ \hline
			PointNet    & $\text{100.0 / 0.00}$       & $\text{94.63 / 94.21}$ & $\text{88.57 / 87.01}$ \\
			PointNet++    & $\text{89.79 / 83.01}$ & $\text{100.0 / 0.00}$       & $\text{84.42 / 81.56}$ \\ 
			DGCNN       & $\text{92.63 / 87.53}$ & $\text{95.89 / 94.63}$ & $\text{100.0 / 0.00}$       \\ \bottomrule[1pt]
		\end{tabular}

		\label{attack_trans}
	\end{center}
\end{table}

We explore further, whether the failure of black-box attack comes from the characteristics of PC-Nets / point clouds, or simply the ineffective attack method. Inspired from the success of momentum-based iterative attack on natural images \cite{dong2018boosting}, we develop a Pointwise Gradient variant enhanced by momentum (see Section \ref{sec:pg}), named Momentum Poinwise Gradient method (MPG). Surprisingly, MPG significantly improve the transferability of the adversarial \tifs{examples} (Table \ref{table:mpg}). Most adversarial \tifs{examples} obtain low post-attacking accuracy on the evaluating networks, though MPG attack performance on the source networks is not observed significantly different from that of PG. It is interesting that adversarial \tifs{examples} generated by DGCNN are less transferable than the another, even though the DGCNN is more robust to the adversarial PG attack.  

\tifs{Our results put a light on the black-box attack based on the transferability of adversarial point clouds. It is interesting to find that even though it seems harder to generate transferable adversarial point clouds with vanilla attack methods, it is still possible to achieve this by more sophisticated attack methods, e.g., the Pointwise-Gradient attack enhanced by the momentum attack \cite{dong2018boosting}. It could be explained by the ``machine learning view'' of adversarial examples \cite{dong2018boosting}: machine learning methods with better generalization could be possibly extended to more transferable black-box attack. Besides, the momentum attack accumulates the historical perturbation in a moving average fashion, which is also very similar to the methods synthesizing robust adversarial examples \cite{pmlr-v80-athalye18b}. Therefore, it is expected that the input diversity method \cite{Xie2018ImprovingTO} and translation-invariant attack \cite{dong2019evading} could also be very promising in generating transferable adversarial point clouds. We leave this for further exploration.}

\begin{table}
	\caption{\textbf{Transferability evaluation on Momentum Pointwise Gradient method (MPG)}. Pre-attacking / post-attacking accuracy are depicted. \tifs{The attack budget $\mathcal{D}_C=0.03$.}}
	
	\begin{center}
		%  \begin{tabular} {p{1.2cm}||p{0.7cm}|p{0.7cm}|p{0.7cm}|p{0.7cm}|p{0.7cm}|c{0.7cm}}
		\begin{tabular}{p{1.8cm}p{1.6cm}p{1.75cm}p{1.55cm}}
			\toprule[1pt]
			From $\backslash$ To & \ \ \ PointNet          & \ PointNet++        & \ \ \ DGCNN             \\ \hline
			PointNet    & $\text{100.0 / 0.00}$       & $\text{80.30 / 5.464}$ & $\text{90.07 / 4.801}$ \\
			PointNet++    & $\text{91.35 / 2.275}$ & $\text{100.0 / 0.00}$       & $\text{93.74 / 2.844}$ \\ 
			DGCNN       & $\text{80.81 / 9.091}$ & $\text{71.21 / 19.70}$ & $\text{100.0 / 0.00}$       \\ \bottomrule[1pt]
		\end{tabular}
		\label{table:mpg}  
	\end{center}
\end{table}

\subsection{Between PC-Nets and CNNs}
To fully investigate the transferability of adversarial \tifs{examples} generated by our methods, we explore the transfer robustness between PC-Nets and CNNs on MNIST dataset. Data conversion details between point clouds and MNIST images are provided as follows: 1) \textbf{From images to point clouds}, we convert each pixel to its normalized coordinate $(x,y)$ and the gray-scale value to $z$-axis, which results in point cloud $\{(x,y,z)\}$. 2) \textbf{From (attacked) point clouds to images}, we firstly abandon the points whose $x$ / $y$ coordinates are less than 0 or more than 1, which are out of border; we then convert $(x,y)$ to the corresponding pixel position in image with $\frac{1}{28}$ quantification, and $z$ to pixel gray-scale value. If several points are converted to an overlapping pixel position in image, we adapt their average $z$ values. For the pixels without corresponding points, we set their values as zero. It is worth noting that all the pixels (instead of non-zero pixels only) in the images are converted into point clouds, since there are too many non-zero pixels on the attacked images using CNNs.

To fairly compare the transferability performance, a LeNet-like CNN \cite{lecun1998gradient} is trained on the MNIST image dataset, with an accuracy of 99.1\%; a PointNet \cite{qi2016pointnet} is trained on the MNIST-converted point clouds, with a comparable accuracy (99.2\%). Our experiments indicate that adversarial \tifs{examples} generated by PC-Net are partially transferable to CNN, however those generated by CNN are hardly transferable to PC-Net.

\paragraph{Transferring from PC-Net to CNN}
We apply Pointwise Gradient method to attack the PC-Net (PointNet) which classifies MNIST. The post-attacking accuracy $\eta$ is $21\%$. After converting the adversarial point clouds to images, we transfer them to the CNN model, the accuracy of the CNN model drops from $99.1\%$ to $60.8\%$, which implies these adversarial \tifs{examples} are partially transferable. However, we emphasize that the transferred images are not as ``natural'' as the natural MNIST images. Figure \ref{fig:cnn2point} (Left) demonstrates converted image \tifs{examples} generated by PC-Nets.

\paragraph{Transferring from CNN to PC-Net}
Adversarial \tifs{examples} generated by MNIST CNN under several attack intensities $\mathcal{D}_C$ are transferred to PC-Net. As illustrated in Figure \ref{fig:cnn2point} (Right), the PointNet shows notable robustness: it achieves $98.9\%$ accuracy in the worst case, where the CNN incorrectly classifies all the adversarial \tifs{examples} ($0\%$ accuracy). 

\begin{figure}
	\centering
	\includegraphics[width=\linewidth]{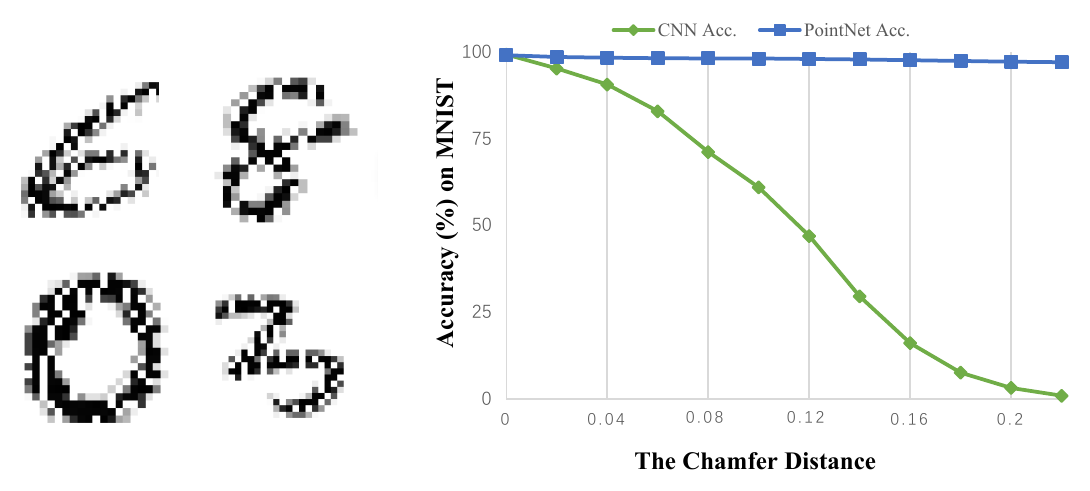} 
	\caption{\textbf{Left}: The converted 2D images from MNIST \tifs{adversarial examples} attacked by PC-Nets.
		\textbf{Right}: Transferred attack performance on PC-Net by adversarial \tifs{examples} generated by CNN. The victim CNN is vulnerable to the adversarial \tifs{examples}, however these adversarial \tifs{examples} are hardly transferable to PC-Net.}
	\label{fig:cnn2point}
\end{figure}

% $$
% \begin{array}{ | l | l | l | }
% \hline
% 	psi & cnn acc & pointnet acc \\ \hline
% 	0 & 99.1 & 98.74 \\ \hline
% 	0.02 & 95.25 & 98.56 \\ \hline
% 	0.04 & 90.63 & 98.39 \\ \hline
% 	0.06 & 82.94 & 98.25 \\ \hline
% 	0.08 & 71.16 & 98.19 \\ \hline
% 	0.1 & 60.99 & 98.12 \\ \hline
% 	0.12 & 46.97 & 97.98 \\ \hline
% 	0.14 & 29.53 & 97.87 \\ \hline
% 	0.16 & 16.05 & 97.66 \\ \hline
% 	0.18 & 7.57 & 97.47 \\ \hline
% 	0.2 & 3.15 & 97.23 \\ \hline
% 	0.22 & 0.9 & 97.07 \\ \hline
% $$

\section{Discussion}
\label{Discussion}

\paragraph{Robustness on Rotation}
% {Rotation can not Produce Adversarial \tifs{Example}s (Robustness Itself is very Poor)}
Rotation does not produce adversarial \tifs{examples} to PointNet effectively, since the PointNet is sensitive to {\bf random} rotation. 

In our experiment, rotation data augmentation has been applied during training the PointNet with T-Net \cite{qi2016pointnet}, which is mimicking spatial transformers \cite{jaderberg2015spatial}. Although the authors declare robustness on PointNet equipped with T-Net, only 0.3-rad random rotation drops the accuracy from $89.2\%$ to $77.53\%$. We attempt to attack the victim PointNet within same degree constraint (0.3 rad) by adversarial rotation, the rotation matrix parameterized by Euler angles is attacked by back-propagating the gradient, see Appendix \ref{appedix:rotation-attack} for details of the rotation attack. Surprisingly, the post-attacking accuracy is $76.01\%$, very closed to the random rotation accuracy $77.53\%$. We conjecture that the weak robustness on rotation provides non-informative gradients (``stochastic gradients" \cite{athalye2018obfuscated}) to the gradient based attack. Though the adversarial rotation attack is not effective enough compared to random setting, the PointNet is not robust to rotation. For AI security, PC-Nets should also be designed with theoretically guaranteed rotation-invariance \cite{cohen2016group, cohen2018spherical}. We leave this for further exploration.

\paragraph{Two Types of Defense Errors}

We characterize two types of defense errors with {\em perturbation-measurement} framework.
{\bf Defense Error I} is the error caused by the defense framework. As illustrated in Figure \ref{fig:defense} (Left), there are overlaps between the distributions of adversarial \tifs{examples} and natural \tifs{examples}. To reduce the Defense Error I, advanced perturbations and measurements with fine-tuned hyper-parameters should be explored.

As we explore the intrinsic robustness of the victim model, we suffer from its weakness by design, resulting in {\bf Defense Error II}. Refer to Figure \ref{fig:defense} (Right), the distribution of $Correct$ and $All$ natural \tifs{examples} is sightly different.
%  , due to the \tifs{examples} misclassified by the model.
%  The second type of error originates in the distribution of natural point cloud \tifs{examples}. We evaluate two forms of measured statistics from natural \tifs{examples} in defense methods: $All$ natural \tifs{examples} and $Correct$ natural \tifs{examples}. 
We observe that the defense performance compared with $Correct$ natural \tifs{examples} is better than that with $All$ natural \tifs{examples}, which means the misclassified natural \tifs{examples} by PC-Nets are more likely to be mistakenly regarded as adversarial \tifs{examples}. We choose to compare with the $Correct$ \tifs{examples} in defense methods to less count the Defense Error II. Theoretically, a better-performing model is able to lower the Defense Error II.

%  the measured statistics are computed from adversarial \tifs{examples} and natural \tifs{examples}. In our experiment, we get the measured statistics of natural \tifs{examples} from two types of $X$. The first $X$ are the whole test data set which denotes as $X\in \mathbb{X}_{All}$. The second $X$ are the \tifs{examples} that the net classify correctly which denotes as $X\in \mathbb{X}_{Correct}$. We observe a phenomenon that the defense AUROC of $X\in \mathbb{X}_{Correct}$ is larger than the corresponding defense AUROC of $X\in \mathbb{X}_{All}$, which means the natural \tifs{examples} that classified incorrectly are more likely to be mistakenly regard as adversarial \tifs{examples} by our defense method.

\paragraph{Limitations and Further Work}

%  White-box attack on single model. Ensemble attack, etc.
There are several limitations of our study. Although we have conducted experiments on black-box attack and transferability, all attack methods are developed in the white-box setting. More research on the black-box attack and the defense techniques in a black-box setting is needed. Besides, our Point-Detach attack method is designed for PointNet, by utilizing the critical-point property. Though the critical-point property lays in various PC-Nets, a universal Point-Detach method should be developed. It is also a future direction.

\tifs{
Moreover, it is also very interesting to explore the physical adversarial point clouds with 3D printing, even though it is a challenging problem. There are several issues for the physical adversarial point clouds: how to deal with the multiple data formats used in the 3D sensors (point clouds, RGB-D images, meshes or voxels), and how to deal with the illumination and sensing noises. These issues make the physical adversarial 3D object generation much more challenging than 2D images.

Although we propose a effective defense strategy to detect the adversarial point clouds, which is robust even with a proof-of-concept attack targeting it, it is expected that any defense strategy is not strong enough in a long term. Considering the increasing importance of reasoning on adversarial point clouds, more research on the adversarial safety is demanding. 
}

\section{Conclusion}
Deep learning on 3D point clouds is an emerging field. In our study, we formulate and discuss the adversarial \textbf{attacks}, \textbf{defenses} and \textbf{transferability} of adversarial clouds. 

We propose three effective attack methods on point clouds. Pointwise Gradient (PG) methods are generally effective on various point cloud networks with high success rate. Point-Attach (PA) and Point-Detach (PD) methods have lower success rate, but are much more physically feasible. 
Besides, a permutation-measurement defense framework is proposed with empirical results. \tifs{Notably, our defense method is still effective to detect the adversarial point clouds generated by a proof-of-concept attack directly targeting it}. 
Furthermore, we investigate the transferability of adversarial \tifs{examples} under 2 setting, 1) between various point cloud networks, and 2) between point clouds and grid CNNs. Momentum-enhanced attack are remarkably effective to improve the transferability of the adversarial point clouds. Grid CNNs are observed less robust than the point cloud approach in our experiments. 

In further study, we are interested in more exploration on physically feasible attack on the 3D objects, black-box attack, and defense strategies on boosting the robustness of the point cloud networks.

% if have a single appendix:
%\appendix[Proof of the Zonklar Equations]
% or
%\appendix  % for no appendix heading
% do not use \section anymore after \appendix, only \section*
% is possibly needed

% use appendices with more than one appendix
% then use \section to start each appendix
% you must declare a \section before using any
% \subsection or using \label (\appendices by itself
% starts a section numbered zero.)
%

% use section* for acknowledgment
%\section*{Acknowledgment}

%The authors would like to thank...

% Can use something like this to put references on a page
% by themselves when using endfloat and the captionsoff option.
%\ifCLASSOPTIONcaptionsoff
%  \newpage
%\fi

% trigger a \newpage just before the given reference
% number - used to balance the columns on the last page
% adjust value as needed - may need to be readjusted if
% the document is modified later
%\IEEEtriggeratref{8}
% The "triggered" command can be changed if desired:
%\IEEEtriggercmd{\enlargethispage{-5in}}

% references section

% can use a bibliography generated by BibTeX as a .bbl file
% BibTeX documentation can be easily obtained at:
% http://mirror.ctan.org/biblio/bibtex/contrib/doc/
% The IEEEtran BibTeX style support page is at:
% http://www.michaelshell.org/tex/ieeetran/bibtex/
%\bibliographystyle{IEEEtran}
% argument is your BibTeX string definitions and bibliography database(s)
%\bibliography{IEEEabrv,../bib/paper}
%
% <OR> manually copy in the resultant .bbl file
% set second argument of \begin to the number of references
% (used to reserve space for the reference number labels box)

\bibliography{ref}
\bibliographystyle{ieee}

% biography section
% 
% If you have an EPS/PDF photo (graphicx package needed) extra braces are
% needed around the contents of the optional argument to biography to prevent
% the LaTeX parser from getting confused when it sees the complicated
% \includegraphics command within an optional argument. (You could create
% your own custom macro containing the \includegraphics command to make things
% simpler here.)
%\begin{IEEEbiography}[{\includegraphics[width=1in,height=1.25in,clip,keepaspectratio]{mshell}}]{Michael Shell}
% or if you just want to reserve a space for a photo:

\begin{IEEEbiography}[{\includegraphics[width=1in,height=1.25in,clip,keepaspectratio]{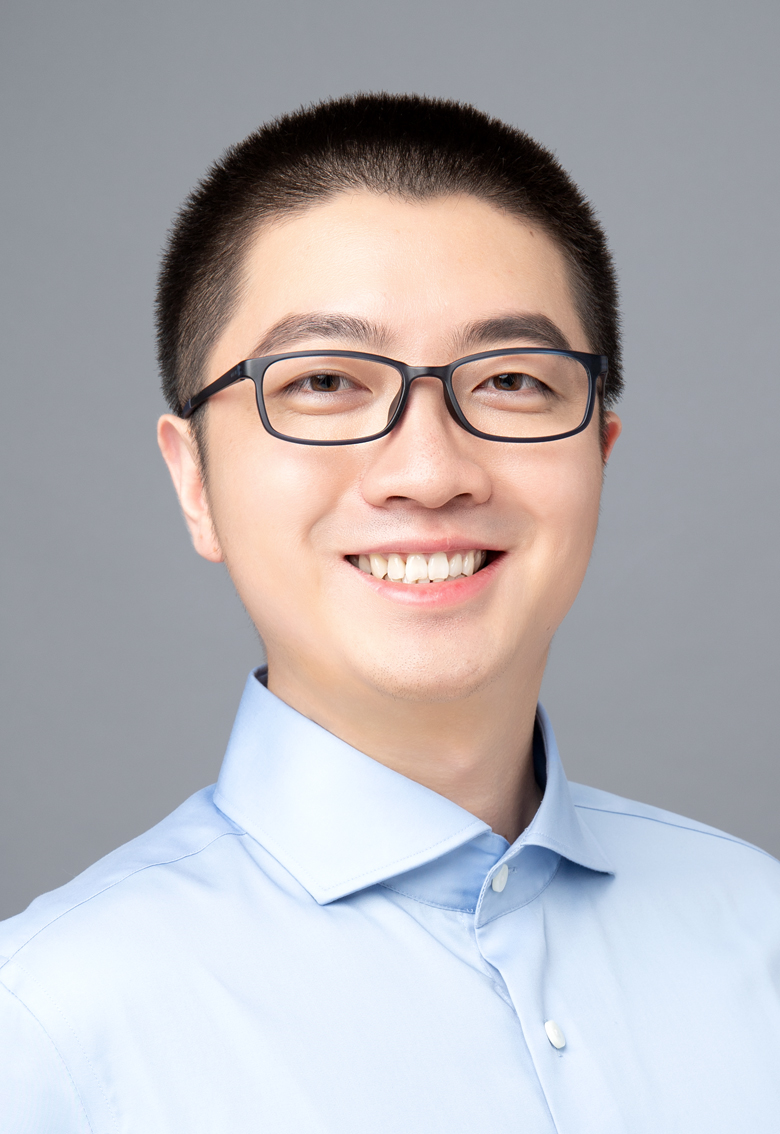}}]{Jiancheng Yang}
	
	received the B.Eng. and M.Eng. degree in Automation from Shanghai Jiao Tong University, China, in 2015 and 2018, respectively. He also received engineer degree (double master degree) in Institut Mines-Telecom, France, in 2016. He is working towards the Ph.D. degree with Shanghai Jiao Tong University. His research interests are deep learning, with emphasis on medical image analysis, 3D computer vision and trustworthy machine learning.
\end{IEEEbiography}

\begin{IEEEbiography}[{\includegraphics[width=1in,height=1.25in,clip,keepaspectratio]{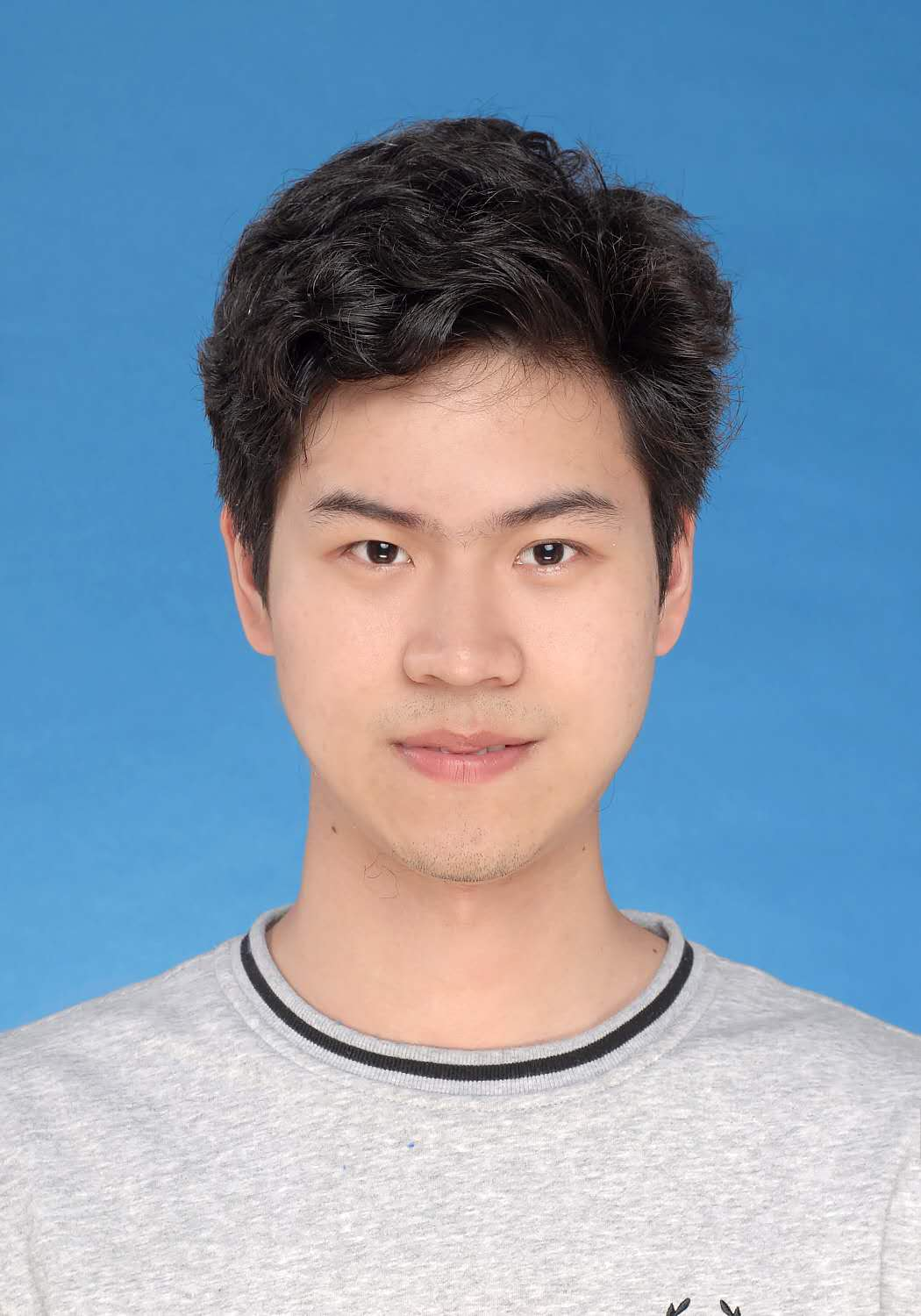}}]{Qiang Zhang}
	is now a B.Eng. student in Computer Science and Engineering at Shanghai Jiao Tong University. He won national championship in Robocup China Open 2018. His research interests are artificial intelligence, computer vision and machine learning, especially in 3D vision, robot vision and autonomous driving.
\end{IEEEbiography}

\begin{IEEEbiography}[{\includegraphics[width=1in,height=1.25in,clip,keepaspectratio]{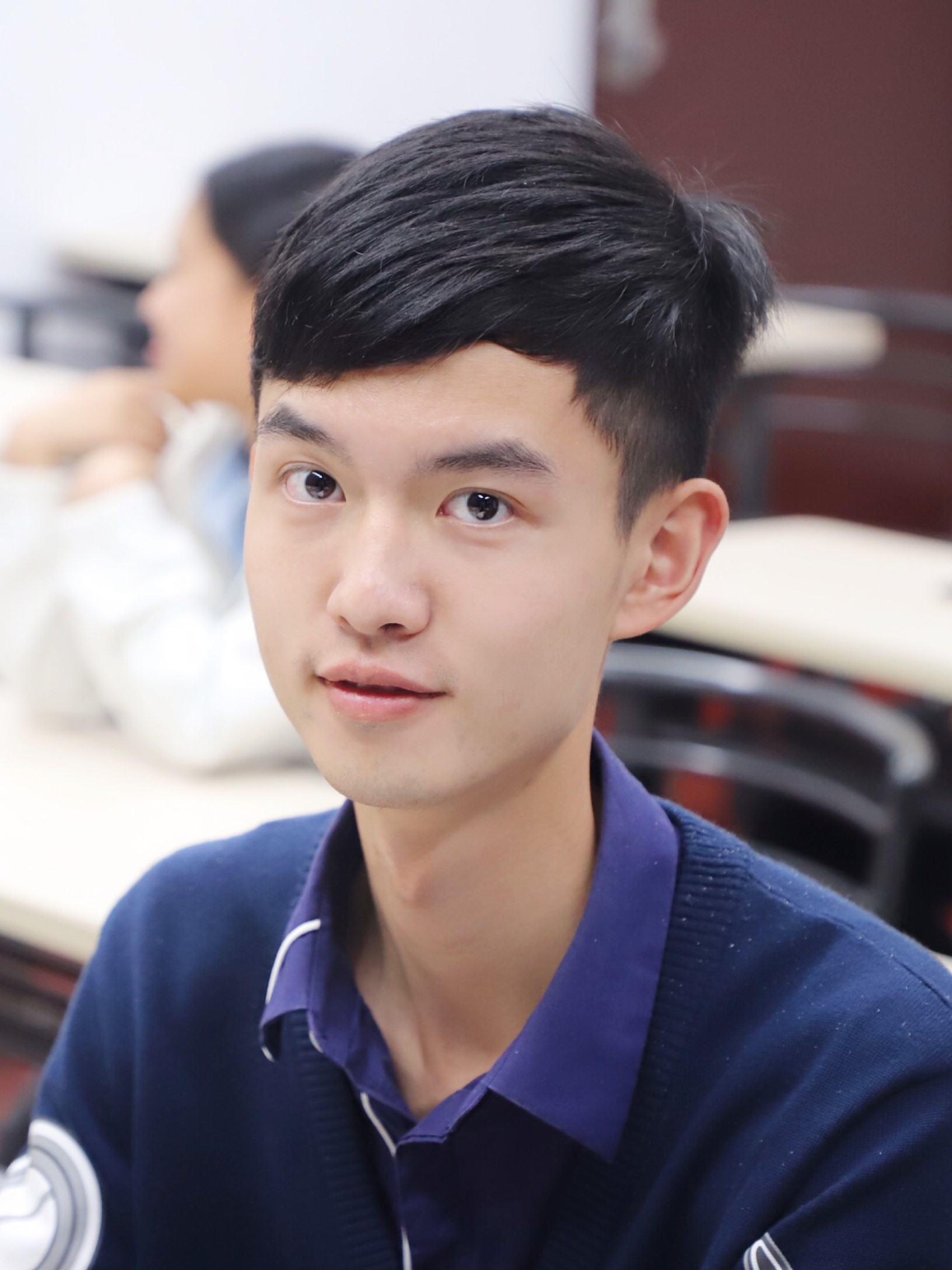}}]{Rongyao Fang}
	is now a B.Eng. student in Electronic Engineering at Shanghai Jiao Tong University. His research interests lay in computer vision and deep learning, particularly 3D computer vision and medical imaging, as well as the application in adversarial example.
\end{IEEEbiography}

\begin{IEEEbiography}[{\includegraphics[width=1in,height=1.25in,clip,keepaspectratio]{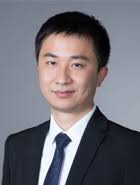}}]{Bingbing Ni}
	received the B.Eng. degree in Electrical Engineering from Shanghai Jiao Tong University, Shanghai, China, in 2005, and the Ph.D. degree from the National University of Singapore, Singapore, in 2011. He is currently a Professor at the Department of Electrical Engineering, Shanghai Jiao Tong University. Before that, he was a Research Scientist in Advanced Digital Sciences Center, Singapore. He was with Microsoft Research Asia, Beijing, China, as a Research Intern in 2009. He was also a Software Engineer Intern with Google Inc., Mountain View, CA, USA, in 2010. Dr. Ni was a recipient of the Best Paper Award from PCM’11 and the Best Student Paper Award from PREMIA’08. He was also the recipient of the first prize in International Contest on Human Activity Recognition and Localization in conjunction with International Conference on Pattern Recognition in 2012.
\end{IEEEbiography}

\begin{IEEEbiography}[{\includegraphics[width=1in,height=1.25in,clip,keepaspectratio]{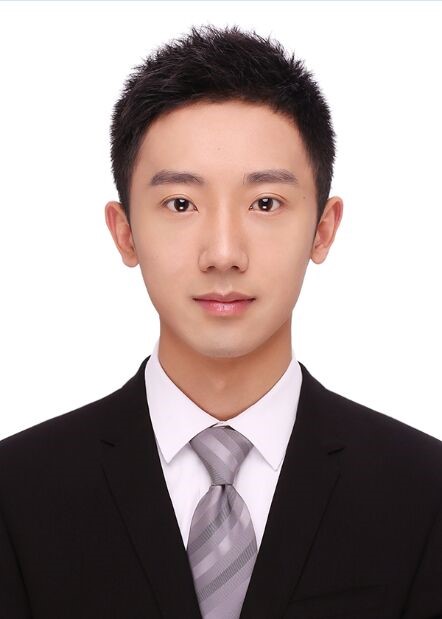}}]{Jinxian Liu}
	received the B.Eng. degree in communication engineering from the Tianjin University, in 2017. He is working towards the Ph.D. degree with Shanghai Jiao Tong University. His research interests are primarily on machine learning and computer vision with applications to 3D data analysis and set learning.
\end{IEEEbiography}

\begin{IEEEbiography}[{\includegraphics[width=1in,height=1.25in,clip,keepaspectratio]{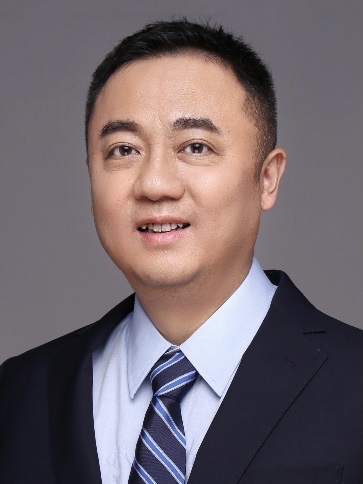}}]{Qi Tian}
	(S’95-M’96-SM’03-F’16) is currently a Chief Scientist of Computer Vision in Huawei Noah's Ark Laboratory, a Full Professor with the Department of Computer Science, University of Texas at San Antonio (UTSA), USA. He was a tenured Associate Professor from 2008-2012 and a tenure-track Assistant Professor from 2002-2008. During 2008-2009, he took one-year Faculty Leave at Microsoft Research Asia (MSRA) as Lead Researcher in the Media Computing Group.
	
	Dr. Tian received his Ph.D. in ECE from University of Illinois at Urbana-Champaign (UIUC) in 2002 and received his B.E. in Electronic Engineering from Tsinghua University in 1992 and M.S. in ECE from Drexel University in 1996. Dr. Tian’s research interests include multimedia information retrieval, computer vision, pattern recognition and published over 360 refereed journal and conference papers. He was the co-author of a Best Paper in ACM ICMR 2015, a Best Paper in PCM 2013, a Best Paper in MMM 2013, a Best Paper in ACM ICIMCS 2012, a Top 10\% Paper Award in MMSP 2011, a Best Student Paper in ICASSP 2006, and co-author of a Best Student Paper Candidate in ICME 2015, and a Best Paper Candidate in PCM 2007. 
	
	Dr. Tian received 2017 UTSA President’s Distinguished Award for Research Achievement, 2016 UTSA Innovation Award, 2014 Research Achievement Awards from College of Science, UTSA, 2010 Google Faculty Award, and 2010 ACM Service Award. He is the associate editor of many journals and in the Editorial Board of Journal of Multimedia (JMM) and Journal of Machine Vision and Applications (MVA).  
\end{IEEEbiography}

% You can push biographies down or up by placing
% a \vfill before or after them. The appropriate
% use of \vfill depends on what kind of text is
% on the last page and whether or not the columns
% are being equalized.

%\vfill

% Can be used to pull up biographies so that the bottom of the last one
% is flush with the other column.
%\enlargethispage{-5in}

% that's all folks

\clearpage
\appendices
\setcounter{table}{0}
\renewcommand{\thetable}{A\arabic{table}}
\setcounter{figure}{0}
\renewcommand{\thefigure}{A\arabic{figure}}
\setcounter{proposition}{0} 
\setcounter{assumption}{0} 
\newtheorem*{lemma}{Lemma}

\section{Proofs} \label{sec:proposition-proof}
\subsection{On Pointwise-Gradient Attack}
\begin{proposition}
	Given any point cloud dataset $\mathbb{X}$, $\exists{\epsilon} \in \mathbb{R}$, $\forall{X \in \mathbb{X}}$ s.t. $\arg\max f(X)=c^{*}(X)$, $\exists{X_a}=T_{PG}(X): $ $\mathcal{D}_C(X,X_a)<\epsilon$ and $\arg\max f(X_a)\neq c^{*}(X) $.
\end{proposition}

\begin{proof}
	
	Ultimately, for any sample $X$ with label $t$, we could pointwisely change it into another sample $X'$ with label $t'$. 
	
	In this way, $\epsilon_X=\mathcal{D}_C(X,X')$. For all samples in the dataset, $\epsilon=\max_{X \in \mathbb{X}}(\epsilon_X)$.
	
\end{proof}

\subsection{On Point-Attach Attack}
\begin{proposition}
	% Given any point cloud dataset $\mathbb{X}$, $\exists{\epsilon} \in \mathbb{R}$, $\exists{N_a} \in \mathbb{N}$, $\forall{X \in \mathbb{X}}$ s.t. $\arg\max f(X)=c^{*}(X)$, $\exists{X_a}=T_{PA}(X): $ $\mathcal{D}_C(X,X_a)<\epsilon$, $\Delta N_{X,X_a} \leq N_a$, and $\arg\max f(X_a)\neq c^{*}(X) $.
	Given any point cloud dataset $\mathbb{X}$, $\exists{\epsilon} \in \mathbb{R}$, $\exists{N_a} \in \mathbb{N}$, $\exists{X \in \mathbb{X}}$ s.t. $\arg\max f(X)=c^{*}(X)$, $\exists{X_a}=T_{PA}(X): $ $\mathcal{D}_C(X,X_a)<\epsilon$, $\Delta N_{X,X_a} \leq N_a$, and $\arg\max f(X_a)\neq c^{*}(X) $.
\end{proposition}
\begin{proof}
	By contradiction. 
	
	Assume there is not such sample in the dataset. We choose two \tifs{examples} $X_1$ and $X_2$ with different labels, then attach one to another as a new sample $X'$. If the prediction label is the same as the $X_1$, then we successfully attack $X_2$ by attaching $X_1$. Otherwise, we successfully attack $X_1$ by attaching $X_2$.
\end{proof}

\subsection{On Max Confidence-Based Measurement}
\begin{assumption}
	PC-Nets are local continuous convex or concave functions around natural \tifs{examples}. 
\end{assumption}

\begin{assumption} The proportion of adversarial \tifs{examples} in the local area around natural \tifs{examples} is small enough, i.e., given \tifs{benign sample} $X_n$, 
	\begin{align*}
	&\exists{\delta}>0, \text{define} \, \mathbb{D}_{\delta}=\{X \,|\,\mathcal{D}_{C}(X,X_n)<\delta\}, \\&\mathbb{B}_\delta= \{X_a \, is \, adversarial \,\tifs{example} \,|\, \mathcal{D}_C(X_a,X_n)<\delta\} \subset \mathbb{D}_\delta,
	\\&\exists{\epsilon} \ll 1: \forall{X}, P(X\in\mathbb{B}_\delta)/P(X\in\mathbb{D}_\delta)<\epsilon .
	\end{align*}
\end{assumption}

\begin{proposition}
	Given any \tifs{sample} $X$, it can always be detected whether it is an adversarial \tifs{example} or a \tifs{benign sample} by Max Confidence-Based Measurement (Section \ref{defense:confidence}) ($CoV$ for Convex functions and $CoA$ for Concave functions), with Gaussian Noising or Quantification perturbation (Section \ref{defense:perturbation}).
\end{proposition}
\begin{proof}
	We prove this proposition in two separate propositions (Proposition \ref{prop-concave} and Proposition \ref{prop-convex}).
\end{proof}

\begin{proposition}[Concave] \label{prop-concave}
	$CoA$ is an effective index to detect whether an given sample X is an adversarial \tifs{example} or a \tifs{benign sample} if PC-Nets are local continuous concave functions.
\end{proposition}

\begin{proof}
	Without losing generality, we choose a dataset containing with only 2 labels: negative and positive, the network output value is limited between 0 and 1. Specifically, if it is more than 0.5, it denotes positive class, otherwise, it denotes negative class. Similar to Gaussian Noising, Quantization is equavalent to add perturbation using uniform distribution. For simplicity, we take uniform distribution as example.
	
	Assume $x_1$ is the quantized data for a negative adversarial \tifs{examples} $X_a$ and $x_2$ is the quantized data of the corresponding positive natural \tifs{examples} $X$. Besides, we apply a same perturbation to $x_1$ and $x_2$, $\mathcal{D}_C(x_1,X_a) \in [a_1,b_1]$, $\mathcal{D}_C(x_2,X) \in [a_2,b_2]$, note $b_1-a_1=b_2-a_2$ (for the ``same perturbation"). Thus, according to the definition of $CoA$, we have:
	$$CoA(x_1)=\frac{1}{b_1-a_1}\int_{a_1}^{b_1}[1-f(x)]dx,$$
	$$CoA(x_2)=\frac{1}{b_2-a_2}\int_{a_2}^{b_2}f(x)dx.$$
	
	Define $f$ is the PC-Net, $f'$ is its derivative. Since $f$ is continuous, $\exists{c}: f(c)=0.5$ when the point set varies from $x_2 (f(x_2)>0.5)$ to $x_1 (f(x_1)<0.5)$. Besides, since $f$ is a concave function via the distance, we have the following equations:
	$$f(x_1)>0.5-|f'(c)|*\mathcal{D}_C(x_1,c),$$
	$$f(x_2)>0.5+|f'(c)|*\mathcal{D}_C(c,x_2).$$
	
	\begin{figure}[!htb]
		\centering
		\includegraphics[width=8cm]{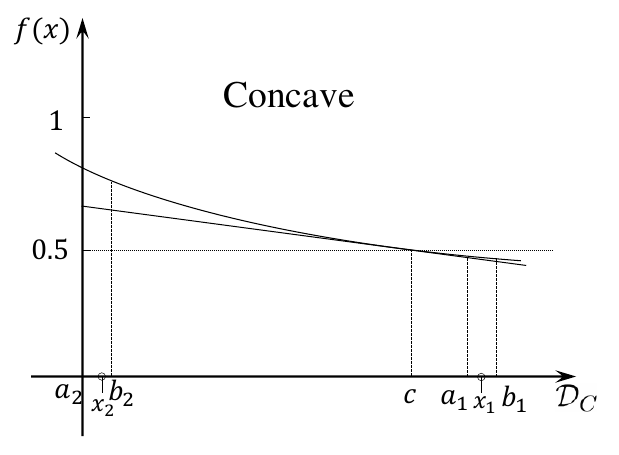}
		\caption{Illustration for Proposition \ref{prop-concave}. The tangent at $f(c)=0.5$ is shown in the figure.}
		\label{Figure:Concave}
	\end{figure}
	
	According to our assumption, adversarial \tifs{examples} are much less than natural \tifs{examples}, as illustrated in Figure \ref{Figure:Concave}:
	$$\mathcal{D}_C(c,x_2)>\mathcal{D}_C(x_1,c).$$
	Thereby, 
	% $$f(x_1)+f(x_2)>1+|k|*[c-x_1-(x_2-c)],$$
	$$f(x_2)>1-f(x_1)$$
	$$\Rightarrow \frac{1}{b_2-a_2}\int_{a_2}^{b_2}f(x)dx >\frac{1}{b_1-a_1}\int_{a_1}^{b_1}[1-f(x)]dx,$$
	which means $CoA(x_2)>CoA(x_1)$, thus $CoA$ is effective to detect adversarial \tifs{examples}.
\end{proof}

To prove the next proposition, we provide following lemma:
\begin{lemma}
	$\forall{x}$, $\forall{f(x)}$, $\forall{c}$:
	$\int_{a}^b[f(x)-\frac{1}{b-a}\int_a^bf(x)dx]^2dx \leqslant \int_a^b(f(x)-c)^2dx $
\end{lemma}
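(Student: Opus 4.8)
The plan is to recognize the inequality as the standard \emph{variance-minimization} fact: among all constants, the average of $f$ over $[a,b]$ is the unique minimizer of the integrated squared deviation. Writing $\bar f = \frac{1}{b-a}\int_a^b f(x)\,dx$ for that average, the left-hand side is precisely $\int_a^b (f(x)-\bar f)^2\,dx$, so the goal reduces to showing $\int_a^b (f(x)-\bar f)^2\,dx \le \int_a^b (f(x)-c)^2\,dx$ for every constant $c$.

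First I would write $f(x)-c = (f(x)-\bar f) + (\bar f - c)$ inside the right-hand integrand and expand the square, which yields three pieces: $\int_a^b (f(x)-\bar f)^2\,dx$, a cross term $2(\bar f - c)\int_a^b (f(x)-\bar f)\,dx$, and the constant contribution $(b-a)(\bar f - c)^2$. The single step that drives the argument is showing the cross term vanishes: by the very definition of $\bar f$ we have $\int_a^b (f(x)-\bar f)\,dx = \int_a^b f(x)\,dx - (b-a)\bar f = 0$. Consequently $\int_a^b (f(x)-c)^2\,dx = \int_a^b (f(x)-\bar f)^2\,dx + (b-a)(\bar f - c)^2$, and since the last summand is nonnegative the claimed inequality follows immediately, with equality exactly when $c = \bar f$.

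There is no genuine obstacle here — the proof is a one-line completion of the square — so the only care required is a mild integrability proviso: the statement should be read for $f$ whose relevant integrals exist (the PC-Net outputs are continuous on the compact perturbation range, so this is satisfied), which guarantees that $\bar f$ is well defined and that linearity of the integral legitimately splits the integral of the expanded square.
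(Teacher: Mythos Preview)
Your proof is correct and uses essentially the same completing-the-square idea as the paper. The only cosmetic difference is that you expand $(f-c)^2$ around $\bar f$ so the cross term vanishes immediately, whereas the paper expands $(f-\bar f)^2$ around $c$ and then combines the two residual terms into $-\tfrac{1}{b-a}\bigl[\int_a^b(f(x)-c)\,dx\bigr]^2$; both routes yield the same identity and hence the inequality.
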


\begin{proof} 
	\begin{align*}
	&\int_a^b[f(x)-\frac{1}{b-a}\int_a^bf(x)dx]^2dx \\
	&=\int_a^b(f(x)-c)^2dx+2(c-\frac{1}{b-a}\int_a^bf(x)dx)\times \\
	&\ \ \ \int_a^b(f(x)-c)dx+(c-\frac{1}{b-a}\int_a^bf(x)dx)^2(b-a) \\
	&=\int_a^b(f(x)-c)^2dx+\frac{2}{a-b}[\int_a^b(f(x)-c)dx]^2 \\
	&\ \ \ +\frac{1}{b-a}[\int_a^b(f(x)-c)dx]^2\\
	&\leqslant \int_a^b(f(x)-c)^2dx
	\end{align*}
\end{proof}

\begin{proposition}[Convex] \label{prop-convex}
	$CoV$ is an effective index to detect whether an given sample X is an adversarial \tifs{example} or a \tifs{benign sample} if PC-Nets are local continuous convex functions.
\end{proposition}

\begin{figure}[!htb]
	\centering
	\includegraphics[width=8cm]{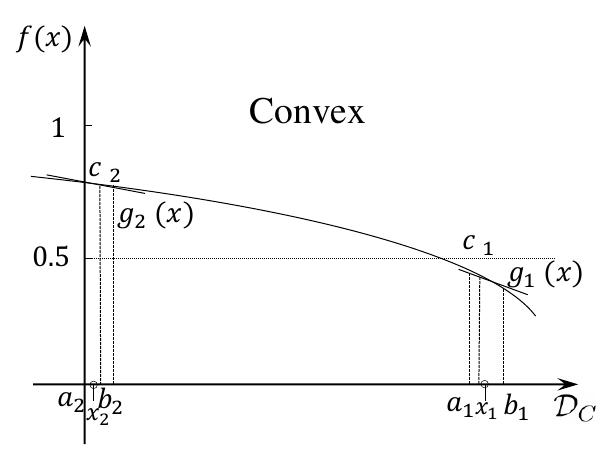}
	\caption{Illustration for the Proposition \ref{prop-convex}. The two straight lines are parameterized by $g_1(x)=f'(a_1)x+l_1, s.t. \, g_1(c_1)=f(c_1)$, $g_2(x)=f'(b_2)x+l_2,s.t.\,f(c_2)=g(c_2)$.}
	\label{Figure:Convex}
\end{figure}

\begin{proof}
	We use the same notations in the Proposition \ref{prop-convex}.
	\item{For adversarial \tifs{example}:}
	according to mean value theorem of integrals, 
	$\exists c_1, f(c_1) = \frac{1}{b_1-a_1}\int_{a_1}^{b_1}f(x)dx$, we construct:
	$g_1(x)=f'(a_1)x+l_1, s.t. g_1(c_1)=f(c_1)$.
	
	\begin{align*}
	CoV(x_1) &= \int_{a_1}^{b_1}[f(x)-f(c_1)]^2dx\\
	&>\int_{a_1}^{b_1}[g(x)-f(c_1)]^2dx \,\text{(illustrated in Figure \ref{Figure:Convex})}\\  
	&=\int_{a_1}^{b_1}[g(x)-g(c_1)]^2dx\\
	&\ge\int_{a_1}^{b_1}[g(x)-\frac{1}{b_1-a_1}\int_{a_1}^{b_1}g(x)dx]^2dx \,\text{(Lemma)}\\
	&=\int_{a_1}^{b_1}[f'(a_1)x-\frac{f'(a_1)(a_1+b_1)}{2}]^2dx\\
	&=\frac{1}{12}f'(a_1)^2(b_1-a_1)^3
	\end{align*}
	
	\item{For \tifs{benign sample}:}
	according to our assumption that $f(x)$ is continuous, we can choose $c_2$ satisfying $D_C(c_2,x_2)=\frac{a_2+b_2}{2}$ and construct $g_2(x)=f'(b_2)x+l_2,s.t.f(c_2)=g(c_2)$.
	\begin{align*}
	CoV(x_2)&=\int_{a_2}^{b_2}[f(x)-\frac{1}{{b_2}-{a_2}}\int_{b_2}^{b_2}f(x)dx]^2dx\\
	&\leq \int_{a_2}^{b_2}[f(x)-f(c_2)]^2dx \,\text{(Lemma)}\\
	&<\int_{a_2}^{b_2}[g_2(x)-f(c_2)]^2dx \,\text{(illustrated in Figure \ref{Figure:Convex})}\\
	&=\int_{a_2}^{b_2}[g_2(x)-g(c_2)]^2dx\\
	&=\int_{a_2}^{b_2}[f'(b_2)x-\frac{f'(b_2)(a_2+b_2)}{2}]^2dx\\
	&=\frac{1}{12}f'(b_2)^2(b_2-a_2)^3\\
	\end{align*}
	
	Since $a_1>b_2$ and $f(x)$ is a convex function, $f'(a_1)^2>f'(b_2)^2$. Besides, we have $b_2-a_2=b_1-a_1$, which means $CoV(x_1)>CoV(x_2)$. Thereby, $CoV$ is effective to detect adversarial \tifs{examples} and \tifs{benign samples}.
\end{proof}

\begin{table*}
		
	\caption{More defense results by applying the perturbation and measurement methods against various attack.}
		
	\begin{center}
		\begin{tabu} to 1\textwidth{X[c]X[c]X[c]X[c]X[c]X[c]X[c]X[c]X[c]X[c]X[c]}
			\toprule[1pt]
			\multicolumn{2}{c}{\multirow{2}{*}{AUROC}} & \multicolumn{3}{c}{Random Sampling}       & \multicolumn{3}{c}{Quantification} &   \multicolumn{3}{c}{Gaussian Noising}                    \\
			\multicolumn{2}{c}{}  & SIV     & CoA & CoV & SIV        & CoA    & CoV   & SIV        & CoA   & CoV  \\ \hline                    
			%  \multicolumn{2}{c}{} & \multicolumn{3}{c}{Random Sampling}       & \multicolumn{3}{c}{Quantification} &   \multicolumn{3}{c}{Gaussian Noising}                    \\ \hline
			%  \multicolumn{2}{c}{AUROC} & SIV     & CoA & CoV & SIV        & CoA    & CoV   & SIV        & CoA   & CoV  \\ \hline
			\multicolumn{2}{c}{Pointwise Gradient} & \multicolumn{3}{c}{$n = 500$}       & \multicolumn{3}{c}{$\mu = 0.08$} &   \multicolumn{3}{c}{$\sigma = 0.020$}                    \\ \hline
			\multirow{2}{*}{$\mathcal{D}_C = 0.02$}    & All      & 0.8806  & 0.8544    & 0.8749   & \textbf{0.9168}    & 0.8639       & 0.9086      & 0.8839     & 0.8589       & 0.8816     \\
			& Correct  & 0.9132   & 0.9006    & 0.9054    & \textbf{0.9381}     & 0.9125       & 0.9323      & 0.9207    & 0.9040       & 0.9192     \\ 
			\hline
			
			\multicolumn{1}{l}{}  & \multicolumn{1}{l}{} & \multicolumn{3}{c}{$n=500$} & \multicolumn{3}{c}{$\mu = 0.12$} & \multicolumn{3}{c}{$\sigma = 0.028$} \\ \hline
			\multirow{2}{*}{$\mathcal{D}_C = 0.05$} & All                  & 0.8092    & 0.74283  & 0.80705  & \textbf{0.8352}       & 0.75224     & 0.82039     & 0.78812  & 0.74675 & 0.78686 \\
			& Correct              & 0.84692   & 0.79698  & 0.84286  & \textbf{0.85932}      & 0.80872     & 0.84934     & 0.82665  & 0.79681 & 0.82403 \\ \hline

			\multicolumn{2}{c}{Point-Detach}                   & \multicolumn{3}{c}{$n = 950$} & \multicolumn{3}{c}{$\mu=0.02$} & \multicolumn{3}{c}{$\sigma = 0.012$} \\ \hline
			\multirow{2}{*}{$N_d = 20$}         & All     & 0.9053  & 0.9083  & 0.9016   & 0.9154     & 0.9109        & 0.9097      & \textbf{0.9229}    & 0.9139        & 0.9152     \\ 
			& Correct & 0.9393 & 0.9439 & 0.9357   & 0.9454    & 0.9455       & 0.9407      & \textbf{0.9517}    & 0.9475       & 0.9453     \\ 
			\hline
			\multirow{2}{*}{$N_d = 40$} & All     & 0.8705 & 0.8770  & 0.8681 & 0.8901  & 0.8827 & 0.8863 & \textbf{0.9011} & 0.8903 & 0.8969  \\
			& Correct & 0.9130 & 0.9190 & 0.9102 & 0.9249 & 0.9231 & 0.9216 & \textbf{0.9343}  & 0.9282 & 0.9304  \\ \hline
			\multicolumn{2}{c}{Point-Attach}  &\multicolumn{3}{c}{$n = 1000$} &  \multicolumn{3}{c}{$\mu = 0.02$} &  \multicolumn{3}{c}{$\sigma = 0.012$} \\ \hline
			
			\multirow{2}{*}{$N_a = 16$}  & All                  & \textbf{0.9632}   & 0.8275   & 0.9588  & 0.8058      & 0.8084    & 0.8031     & 0.8169  & 0.8113 & 0.8159 \\
			& Correct & \textbf{0.9750}              & 0.8712   & 0.9720   & 0.84656  & 0.8509      & 0.8442     & 0.8578     & 0.8539  & 0.8565  \\ \hline
			
			\multirow{2}{*}{$N_a = 32$}      & All     & \textbf{0.9729} & 0.8507 & 0.9686  & 0.8366  & 0.8329 & 0.8365 & 0.8516 & 0.8379 & 0.8512 \\
			& Correct & \textbf{0.9816} & 0.8934 & 0.9785 & 0.8752 & 0.8752 & 0.8746  & 0.8890 & 0.8800 & 0.8878 \\ 
			
			% \multicolumn{1}{l}{}  & \multicolumn{1}{l}{} & \multicolumn{3}{c}{$n=500$} & \multicolumn{3}{c}{$\mu = 0.12$} & \multicolumn{3}{c}{$\sigma = 0.028$} \\ \hline
			
			\bottomrule[1pt]
		\end{tabu}

		\label{appendixform:defense}
	\end{center}
\end{table*}

\tifs{
\section{Experiment Details in this study}

\subsection{Training of PointNet for ModelNet40}
\paragraph{Model}
PointNet network topology follows official model and the loss function is cross-entropy loss for our classification task.
Input pre-process technique is the same as that in official model experiment: zero center, maximum L2 distance for each point to the point cloud center is 1m.
We use a standard normal distribution to randomly initialize the network.

\paragraph{Training}
As for the optimizer, it is Adam with learning rate 0.001, and mini-batch number equals 32 and iteration epoch number is 250.
The training data is randomly visited and shuffled between epochs. 

\paragraph{Implementation}
We use Titan Xp GPUs to train the model with PyTorch(version 1.0). For the criterion to determine hyperparameter we choose original official values.

\subsection{Training of PointNet++ for ModelNet40}
\paragraph{Model}
PoineNet++ network topology follows official model and the loss function is cross-entropy loss for our classification task.
Input pre-process technique is the same as that in official model experiment: zero center, maximum l2 distance for each point to the point cloud center is 1m.
The network is randomly initialized with a standard normal distribution.

\paragraph{Training}
As for the optimizer, it is Adam with initial learning rate 0.001, after every 200000 steps learning rate decays with a ratio 0.7, at the same time, the minimum learning rate is 0.00001. Iteration epoch is 250.
The training data is randomly visited and shuffled between epochs. 

\paragraph{Implementation}
We use Titan Xp GPUs to train the model with TensorFlow(version 1.4). For the criterion to determine hyperparameter we choose original official values.

\subsection{Training of DGCNN for ModelNet40}
\paragraph{Model}
DGCNN network follows official network and the loss function is cross-entropy loss for our classification task.
Input pre-process technique is the same as that in official model experiment: zero center, maximum l2 distance for each point to the point cloud center is 1m.

\paragraph{Training}
As for the optimizer, it is Adam with initial learning rate 0.001, after every 200000 steps learning rate decays with a ratio 0.7, at the same time, the minimum learning rate is 0.00001. Iteration epoch is 250.
The training data is randomly visited and shuffled between epochs. 

\paragraph{Implementation}
We use Titan Xp GPUs to train the model with TensorFlow(version 1.4). For the criterion to determine hyperparameter we choose original official values.

\subsection{Training of PointNet for MNIST}
\paragraph{Model}
PointNet network topology follows official model and the loss function is cross-entropy loss for our classification task.
Input pre-process: for each image, we transfer each pixel into a point, assume the pixel coordinate is (x,y), whose grey value is c, then the point coordinate would be (x,y,c), finally, the obtained point cloud will be processed as that in the modelnet40 dataset.
The network is randomly initialized with a standard normal distribution.

\paragraph{Training}
As for the optimizer, it is Adam with learning rate 0.001 and iteration epoch number is 25.
The training data is randomly visited and shuffled between epochs. 

\paragraph{Implementation}
We use Titan Xp GPUs to train the model with PyTorch(version 1.0). For the criterion to determine hyperparameter We use cross-validation and grid search technique to select relatively approximate epoch number and learning rate.

\subsection{Training of CNN for MNIST}
\paragraph{Model}
Our CNN network is the same as LeNet and the loss function is cross-entropy loss.
The network is randomly initialized with a standard normal distribution.

\paragraph{Training}
As for the optimizer, it is Adam with learning rate 0.001 and iteration epoch number is 25.
The training data is randomly visited and shuffled between epochs. 

\paragraph{Implementation}
We use Titan Xp GPUs to train the model with PyTorch(version 1.0). For the criterion to determine hyperparameter We use cross-validation and grid search technique to select relatively approximate epoch number and learning rate.

}
\section{Details of Rotation Attack} \label{appedix:rotation-attack}

In Section \ref{Discussion}, the robustness on rotation of PointNet \cite{qi2016pointnet} is evaluated. We use Euler angles (Figure \ref{Figure:Rotation}) to parameterize rotation, and present the attack method as follows.
%We describe our operation as keeping "rotation angles in x-axis and y-axis limited within 0.3 rad". In detail, the "x-axis" and "y-axis" refers to the axis in Figure \ref{Figure:Rotation}

\begin{align*}
M_{\alpha}&=
\left[ \begin{array}{ccc}
\cos{\alpha} & \sin{\alpha} & 0\\
-\sin{\alpha} & \cos{\alpha} & 0\\
0 & 0 & 1
\end{array} \right ] \\
M_{\beta}&=
\left[ \begin{array}{ccc}
1 & 0 & 0\\
0& \cos{\beta} & \sin{\beta}\\
0 & -\sin{\beta} & \cos{\beta}
\end{array} \right ] \\
M_{\gamma}&=
\left[ \begin{array}{ccc}
\cos{\gamma} & \sin{\gamma} & 0\\
-\sin{\gamma}& \cos{\gamma} & 0\\
0 & 0 & 1
\end{array} \right ] \\
X_a &= X \ M_{\gamma}\  M_{\beta}\ M_{\alpha}\\
\nabla \theta&= \frac{\partial{f^{(t)}(X_a)}}{\partial{\theta}}, \theta= (\alpha, \beta, \gamma)
%  &\frac{\partial X_a}{\partial \alpha}=\frac{\partial X_a}{\partial M_{\alpha}} \frac{\partial M_{\alpha}}{\partial \alpha} \label{rotate1} \\
%  &\frac{\partial X_a}{\partial \beta}=\frac{\partial X_a}{\partial M_{\beta}} \frac{\partial M_{\beta}}{\partial \beta}  \label{rotate2} \\
%  &\frac{\partial X_a}{\partial \gamma}=\frac{\partial X_a}{\partial M_{\gamma}} \frac{\partial M_{\gamma}}{\partial \gamma} \label{rotate3}
\end{align*}
Following a gradient-based iteration (e.g., iteration in Pointwise Gradient method), we update the $\theta$ adversarially. Each dimension of $\theta$ (i.e., $\alpha$, $\beta$, and $\gamma$) is limited in 0.3 rad.

\begin{figure}[h]
	\centering
	\includegraphics[width=8cm]{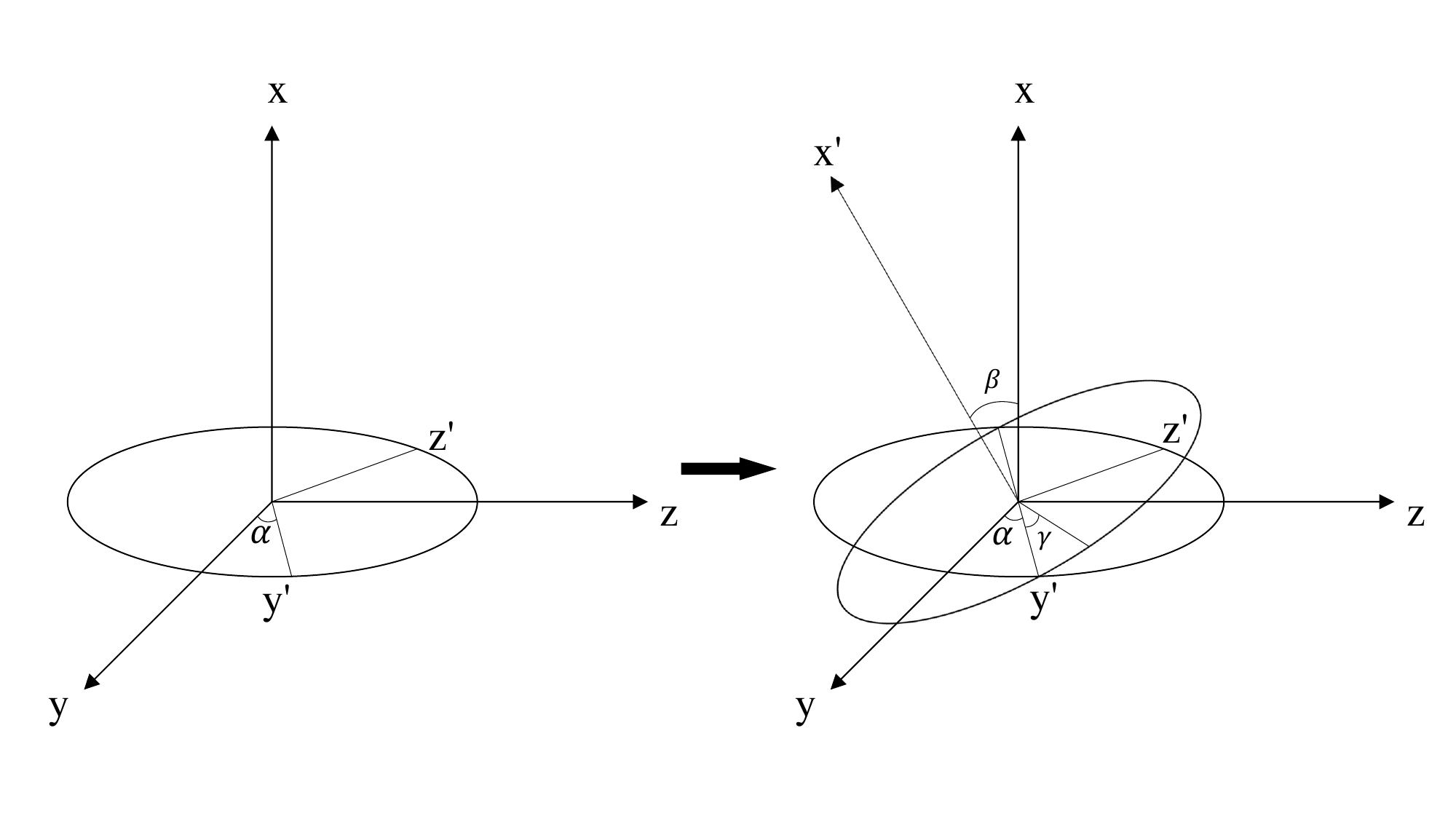}
	\caption{Rotation system using Euler angles $\alpha$, $\beta$, and $\gamma$. The rotation axes are $x$-axis (left), $y'$-axis (right) and $x'$-axis (right).}
	\label{Figure:Rotation}
\end{figure}

\section{More Defense Results}
\label{appendix:defense}
\label{sec:more-defense}
More defense results are depicted in Table \ref{appendixform:defense}. It provides defense AUROC with all combinations of the proposed Perturbation (Gaussian Noising, Quantification, and Random Sampling) $\times$ Measurement ($SIV$, $CoA$, and $Cov$) methods. 

\tifs{
\section{Visualization on Pointwise-Gradient Attack} 
\label{appendix-vis}
To illustrate the distortion of point cloud under Pointwise Gradient attack with various intensities, we visualize several point cloud \tifs{examples} under different attack levels in Figure \ref{fig:attack_vs_chamfer}. It is observed that certain point cloud \tifs{examples} become visibly distorted with the Chamfer distance larger than 0.02.

\begin{figure*}
	\centering
	\includegraphics[width=15.9cm]{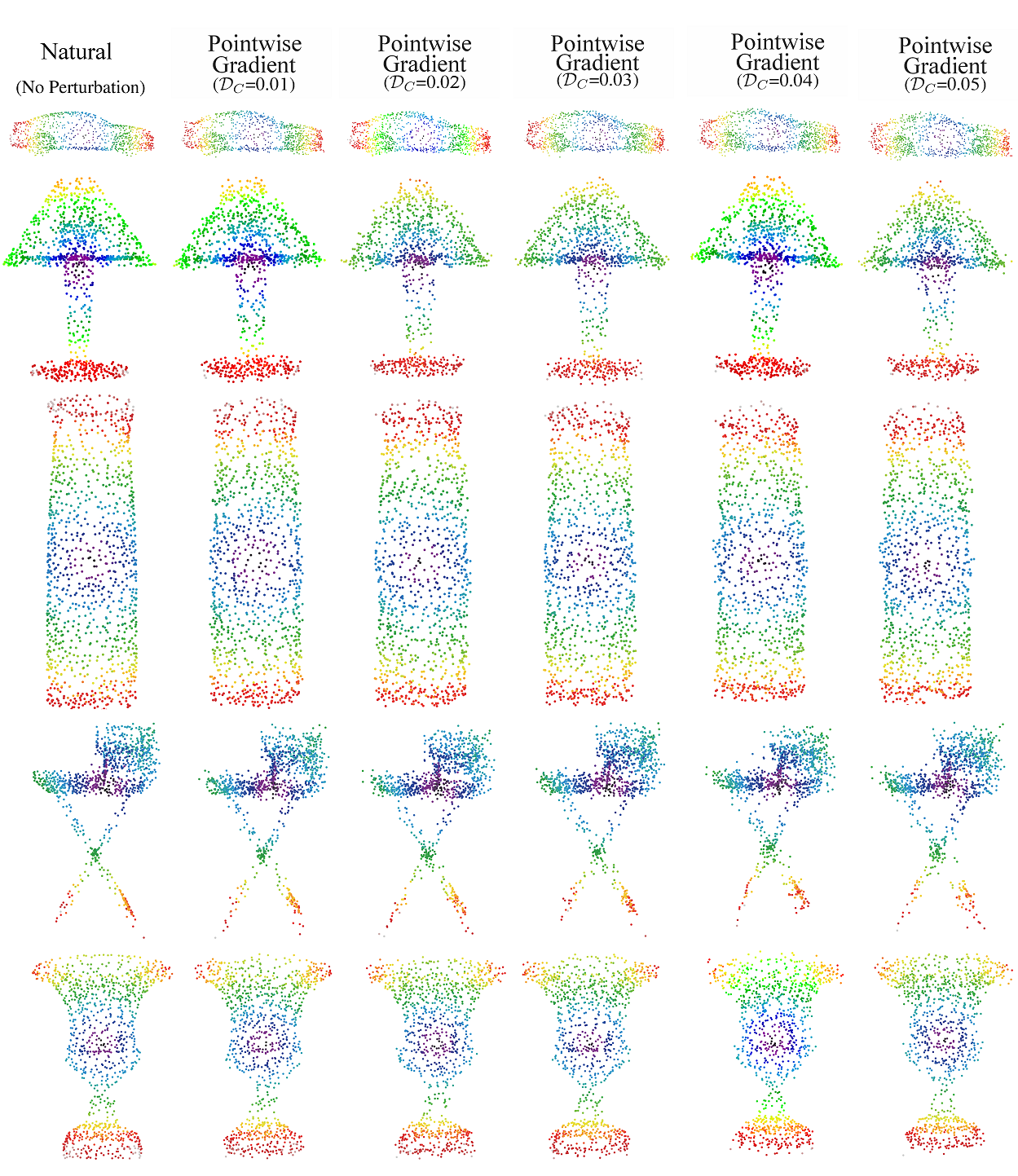}
	
	\caption{\tifs{Benign samples} and Pointwise Gradient attacked \tifs{adversarial examples} with Chamfer distances varying from 0.01 to 0.05. Certain \tifs{examples} become visibly distorted with the Chamfer distance larger than 0.02.}
	\label{fig:attack_vs_chamfer}
\end{figure*}
}
\end{document}